    \tikzset{l/.style={rectangle,draw,fill=blue!30!white},
        n/.style={l,rounded corners,fill=pink!40!white},
        st/.style={rectangle,draw,fill=green!40!white},
        i/.style={circle,inner sep=0mm},
        o/.style={circle,inner sep=0mm},
        c/.style={circle,fill=black,inner sep=.3mm},
        r/.style={circle,fill=black,inner sep=0.7mm},
        ra/.style={circle,draw,inner sep=0.7mm},
        arr/.style={->,>=stealth,very thick},
        b/.style={rectangle,draw},
        layer/.style={rectangle,draw,rounded corners},
        fit margins/.style={draw,dashed,/tikz/afit/.cd,#1,/tikz/.cd,inner xsep=-0.7mm,inner ysep=\pgfkeysvalueof{/tikz/afit/top}+\pgfkeysvalueof{/tikz/afit/bottom},yshift=-\pgfkeysvalueof{/tikz/afit/bottom}+\pgfkeysvalueof{/tikz/afit/top}},
        afit/.cd,bottom/.initial=1mm,top/.initial=1mm
    }
    \tikzstyle{level 1}=[level distance=7mm,sibling distance=17mm]
\colorlet{darkblue}{black!30!blue}
\colorlet{darkgray}{black!30!gray}
\colorlet{darkgreen}{green!50!black}
\newcommand{\figref}[1]{Fig.~\ref{#1}\xspace}
\newcommand{\R}{\ensuremath{\mathbb{R}}\xspace}
\newcommand{\Q}{\ensuremath{\mathbb{Q}}\xspace}
\newcommand{\nats}{\ensuremath{\mathbb{N}}\xspace}
\newcommand{\set}[1]{\{#1\}}
\newcommand{\NN}{\ensuremath{N}\xspace}
\newcommand{\plant}{\ensuremath{P}\xspace}
\newcommand{\NNCSit}[1][\plant,\NN]{\ensuremath{C_{#1}}\xspace}
\newcommand{\Ph}{\ensuremath{\mathcal{P}}\xspace}
\newcommand{\init}{\ensuremath{X_0}\xspace}
\newcommand{\target}{\ensuremath{\varphi}\xspace}
\newcommand{\idim}{\ensuremath{d}\xspace}
\newcommand{\cdim}{\ensuremath{c}\xspace}
\newcommand{\INC}[1]{\ensuremath{\texttt{INC}(#1)}\xspace}
\newcommand{\DEC}[1]{\ensuremath{\texttt{DEC}(#1)}\xspace}
\newcommand{\JZ}[2]{\texttt{JZ}(#1, #2)\xspace}
\newcommand{\pc}{\ensuremath{\mathit{pc}}\xspace}
\newcommand{\mach}{\mathcal{M}}
\newcommand{\stopp}{\texttt{STOP}}
\newcommand{\instr}{\texttt{I}}
\newcommand{\A}{\ensuremath{\mathcal{A}}\xspace}
\newcommand{\dec}{\ensuremath{\mathit{dec}}\xspace}
\DeclareMathOperator{\sign}{sign}
\newcommand{\wff}[1]{\ensuremath{\mathit{WF}_{#1}}\xspace}
\newcommand{\extend}[1]{\ensuremath{\widehat{#1}}\xspace}
\newcommand{\iter}{\ensuremath{I}\xspace}
\newcommand{\MMLM}{\ensuremath{\mathcal{H}}\xspace}
\newcommand{\flow}{\ensuremath{F}\xspace}
\newcommand{\grd}{\ensuremath{G}\xspace}
\newcommand{\UPh}{\ensuremath{\mathcal{FUP}}\xspace}
\title{The Reachability Problem for \\ Neural-Network Control Systems}
\author{%
Christian Schilling\inst{1}\orcidID{0000-0003-3658-1065}
\and
\\
Martin Zimmermann\inst{1}\orcidID{0000-0002-8038-2453}
}
\institute{Aalborg University, Aalborg, Denmark
\\
\email{\{christianms,mzi\}@cs.aau.dk}}
\begin{document}

\maketitle

\begin{abstract}
    A control system consists of a plant component and a controller which periodically computes a control input for the plant.
    We consider systems where the controller is implemented by a feedforward neural network with ReLU activations.
    The reachability problem asks, given a set of initial states, whether a set of target states can be reached.
    We show that this problem is undecidable even for trivial plants and fixed-depth neural networks with three inputs and outputs.
    We also show that the problem becomes semi-decidable when the plant as well as the input and target sets are given by automata over infinite words.
\end{abstract}

\section{Introduction}

Cyber-physical systems consist of digital (cyber) and physical components.
A common instance of this paradigm is a control system, consisting of a physical \emph{plant} and a \emph{controller} whose purpose is to steer the  plant to a desired behavior~\cite{DoyleFT13}.
Control theory studies the automatic synthesis of such controllers, which, for nonlinear systems, is a difficult task. 
Machine learning has long been successfully applied to tackle this task, where the learned controller was first represented by a (shallow) feedforward neural network~\cite{MillerSW95} and more recently by a \emph{deep neural network} (DNN)~\cite{LeGMD22}.
We call a control system with a DNN controller a \emph{neural-network control system} (NNCS).
Due to their black-box nature, DNNs have raised concerns about their correctness and safety, in particular in terms of the worst-case behavior~\cite{SzegedyZSBEGF13}.
However, as they are deployed in safety-critical applications, proving machine-learned NNCS correct is of utmost importance, and considerable resources have been invested into their verification~\cite{LiuALSBK21,LopezAFJLS23}.

In this paper, we are concerned with the fundamental problem of safety for NNCS: given a set of initial states and a set of bad states of the plant, does the controller prevent the plant to reach a bad state when started in an initial state?
Note that the failure of safety is captured by a reachability property: does there exist an initial state from which a bad state is reachable? 
Thus, in the following, we study the reachability problem for NNCS.

Recall that an NNCS is a combination of a DNN (the controller) and a plant. 
It is known that the reachability problem is already undecidable for sufficiently complex plants, even without any controller~\cite{KoiranM99}. 
So the question becomes: is there a simple but expressive class of plants for which the reachability problem is tractable?
Inspired by similar results for recurrent neural networks~\cite{SiegelmannS91,Hyotyniemi96}, we show in Section~\ref{sec:undecidable} that the answer is negative: the reachability problem is undecidable even for trivial plants. 
Intuitively, a DNN can simulate one computational step of a two-counter machine. 
Thus, a recurrent neural network can simulate a two-counter machine. 
As a DNN controlling a plant is essentially recurrent (as it bases its control decisions on the current state of the plant), undecidability follows.

On the positive side, we show in Section~\ref{sec:demi_decidable} that the reachability problem is at least semi-decidable for plants whose behavior can be captured by automata over infinite words:
Sälzer et al.\ showed that the behavior of DNNs can be captured by such automata~\cite{SalzerABL22}. 
Hence, relying on standard automata-theoretic constructions, the composition of a DNN and an automata-definable plant can also be captured by automata.
The class of automata-definable plants includes, for instance, plants that are described by multi-mode linear maps. Such maps are able to express, for example, the dynamics of adaptive cruise controls~\cite{LarsenMT15}.

\subsection{Related work}

Reachability in NNCS is generally challenging.
Existing approaches typically combine techniques developed for dynamical systems (the plant)~\cite{AlthoffFG21} and neural networks~\cite{LiuALSBK21}.
Tools such as CORA~\cite{Althoff15,KochdumperSAB23}, JuliaReach~\cite{BogomolovFFPS19,SchillingFG22}, and NNV~\cite{LopezCTJ23} compete in the ARCH-COMP friendly competition, and we refer to the report~\cite{LopezAFJLS23} for typical examples of NNCS.

Undecidability of questions about unbounded computations with piecewise-linear (PWL) functions is long known, e.g., periodicity in iterated 2D maps~\cite{Moore90} or reachability for linear hybrid automata~\cite{HenzingerKPV98}.
Similar results have been shown for DNNs.
Siegelmann and Sontag showed undecidability for unbounded computations in DNNs with activations given by a PWL approximation of the sigmoid function (which effectively is the ReLU function truncated at~$1$)~\cite{SiegelmannS91}.
Later, Hyotyniemi showed an encoding of two-counter machines in recurrent neural networks (RNNs) with ReLU activations~\cite{Hyotyniemi96}.
While an RNN can conceptually be seen as the special case of an NNCS without a plant, the formalism differs.
We thus consider our encoding of two-counter machines in NNCS of independent (yet mainly pedagogical) value.
Cabessa showed an encoding of two-counter machines in a variant of RNNs with conditional weights called \emph{spike-timing dependent plasticity}~\cite{Cabessa19}.
Recently, we also showed an encoding of two-counter machines in decision-tree control systems~\cite{SchillingLDL23}, where the DNN is replaced by a decision tree with simple conditions $x \leq c$ for some variable~$x$ and constant~$c$.

Katz et al.\ studied the problem of reachability in a ReLU DNN without iteration and showed that, given polyhedral (i.e., described by linear constraints) input and output sets, the reachability problem is NP-complete~\cite{KatzBDJK17}.
Sälzer and Lange later fixed some issues in the proof, mainly related to the effective representation of real numbers~\cite{SalzerL22}.

Sälzer et al.\ recently presented an encoding of a DNN in a weak Büchi automaton~\cite{SalzerABL22}.
We build on this encoding for the analysis of semi-decidability.

\section{Preliminaries}

We start by formally introducing the type of DNN under study.

\begin{definition}[Deep neural network]
    A \emph{neuron} is a function $\nu\colon \R^m \to \R$ with $\nu(\vec{x}) = \sigma(\sum_{i=1}^m c_i x_i + b)$, where $m$ is the input dimension, the $c_i \in \Q$ are the weights, $b \in \Q$ is the bias, and $\sigma \colon \R \to \R$ is the activation function of~$\nu$, which is either the identity function or the rectified linear unit (ReLU) $y \mapsto \max\set{y, 0}$.

    A \emph{layer} is a sequence of neurons $(\nu_1, \dots, \nu_n)$, all of the same input dimension $m$, computing the function $\ell \colon \R^m \to \R^n$ given by $\ell(\vec{x}) = (\nu_1(\vec{x}), \dots, \nu_n(\vec{x}))$.
    The dimensions $m$ and $n$ are the input resp.\ output dimension of the layer.

    A \emph{deep neural network} (DNN) $\NN$ is a sequence of layers $(\ell_1, \dots, \ell_k)$ such that the output dimension of $\ell_i$ is the input dimension of $\ell_{i+1}$ for all $i = 1, \dots, k-1$.
    The last layer is the output layer and all other layers are called hidden layers.
    If $m$ is the input dimension of $\ell_1$ and $n$ is the output dimension of $\ell_k$, then the DNN computes the function $\NN \colon \R^m \to \R^n$ defined as \[ \NN(\vec{x}) = \ell_k(\ell_{k-1}(\dots \ell_1(\vec{x})\dots)).\]
\end{definition}

\begin{figure}[tb]
    \centering
    \begin{tikzpicture}[box/.style={rectangle,draw,thick,rounded corners,drop shadow},t1/.style={thick},t/.style={t1,->,>=stealth}]
	\node[box,minimum height=6mm,fill=green!20!white] (nn) {Neural-network controller $\NN$};
	\node[box,fill=orange!30!white,below=6mm of nn,xshift=3cm] (plant) {\begin{tabular}{c}Plant $\plant$\end{tabular}};
	\node[left=of nn] (x0) {$\vec{x}_0$};
	\draw[t] (plant) -| node[left,near end] {$\vec{x}_k = \plant(\vec{x}_{k-1}, \vec{u}_k)$} (nn);
	\draw[t] (nn) -| node[right,near end] {$\vec{u}_k = \NN(\vec{x}_{k-1})$} (plant);
	\draw[t] (x0) -- (nn);
\end{tikzpicture}
    \caption{Neural-network control system.}
    \label{fig:nncs}
\end{figure}
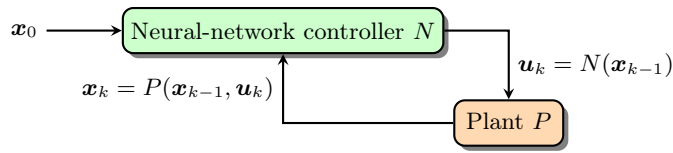

Next, we define control systems as depicted in \figref{fig:nncs}.
The system consists of a plant and a controller (here: a DNN) and acts in iterations: first, the controller computes a control input $\vec{u}$ for the plant based on the current state $\vec{x}$ of the plant. Next, from its inputs~$\vec{x}$ and $\vec{u}$, the plant computes a new state. Then the process repeats.
For the plant, we restrict ourselves to discrete time, i.e., we are only interested in its output and not its intermediate states.
For now, we also abstract from the concrete type of plant and just view it as a general function.

\begin{definition}[Neural-network control system]
    A neural-network control system (NNCS) is a tuple~$(\plant, \NN)$ with a plant $\plant \colon \R^{\idim+\cdim} \to \R^\idim$ and a controller given by a DNN~$\NN\colon \R^\idim \to \R^\cdim$, i.e., $\idim$ is the dimension of the states of $\plant$ and $\cdim$ is the dimension of the control vectors computed by $\NN$.

    The semantics of an NNCS are given as a sequence of states~$\vec{x}_k$ and control inputs~$\vec{u}_k$, induced by some initial state $\vec{x}_0 \in \R^\idim$ via
    \begin{align*}
        \vec{u}_k &= \NN(\vec{x}_{k-1}) \\
        \vec{x}_k &= \plant(\vec{x}_{k-1}, \vec{u}_k)
    \end{align*}
\end{definition}

We introduce a shorthand to express one iteration of the control loop in \figref{fig:nncs}, i.e., the composition of the DNN followed by the plant, to compute $\vec{x}_k$ from $\vec{x}_{k-1}$:
\[
    \NNCSit(\vec{x}_{k-1}) = \plant(\vec{x}_{k-1}, \NN(\vec{x}_{k-1}))
\]

We will focus on sets of states represented by linear constraints.
Given $a \in \Q^n$, $b \in \Q$, the set $H_{a,b} = \{\vec{x} \in \R^n \mid \langle a, \vec{x}\rangle \leq b\}$ is a linear constraint, where ``$\langle\cdot,\cdot\rangle$'' denotes the scalar product.
A polyhedron is a finite intersection of linear constraints.
Let $\Ph(n)$ denote the set of all polyhedra in $n$ dimensions.

We are now ready to define the reachability problem for NNCS.

\begin{problem}[Reachability problem for NNCS]\label{prob:reach}
    Given a DNN $\NN\colon \R^\idim \to \R^\cdim$, a plant $\plant\colon \R^{\idim+\cdim} \to \R^\idim$, a polyhedron~$\init \in \Ph(\idim)$ of initial states, and a polyhedron~$\target \in \Ph(\idim)$ of target states,
    does there exist an initial state $\vec{x}_0 \in \init$ and a $k \in \nats$ such that $(\NNCSit)^k(\vec{x}_0) \in \target$?
\end{problem}

\section{Undecidability}\label{sec:undecidable}

In this section, we prove that the NNCS reachability problem is undecidable.
The proof is by a reduction from the halting problem for two-counter machines.

\medskip

Formally, a two-counter machine~$\mach$ is a sequence
\[
(0:  \instr_0) (1:  \instr_1) \cdots (k-2:  \instr_{k-2})(k-1:  \stopp), 
\]
where the first element of a pair~$(\ell: \instr_\ell)$ is the line number and $\instr_\ell$ for $0 \le \ell < k-1$ is an instruction of the form
\begin{itemize}
    \item $\INC{i}$ with $i \in\set{0,1}$, 
    \item $\DEC{i}$ with $i \in\set{0,1}$, or
    \item $\JZ{i}{\ell'}$ with $i \in\set{0,1}$ and $\ell' \in \set{0, \ldots,k-1}$. 
\end{itemize}
A configuration of $\mach$ is of the form~$(\ell, c_0, c_1)$ with $\ell \in \set{0, \ldots, k-1}$ (the current value of the program counter) and $c_0, c_1\in\nats$ (the current contents of the two counters). 
The initial configuration is~$(0,0,0)$ and the unique successor configuration of a configuration~$(\ell,  c_0, c_1)$ is defined as follows:
\begin{itemize}
    \item If $\instr_\ell = \INC{i}$, then the successor configuration is $(\ell +1, c_0', c_1')$ with $c_i' = c_i +1$ and $c_{1-i}' = c_{1-i}$.
    \item If $\instr_\ell = \DEC{i}$, then the successor configuration is $(\ell +1, c_0', c_1')$ with $c_i' = \max\set{c_i -1,0}$ and $c_{1-i}' = c_{1-i}$.
    \item If $\instr_\ell = \JZ{i}{\ell'}$ and $c_i = 0$, then the successor configuration is $(\ell', c_0, c_1)$.
    \item If $\instr_\ell = \JZ{i}{\ell'}$ and $c_i > 0$, then the successor configuration is $(\ell+1, c_0, c_1)$.
    \item If $\instr_\ell = \stopp$, then $(\ell, c_0, c_1)$ has no successor configuration.
\end{itemize}
The unique run of $\mach$ (starting in the initial configuration) is defined as the maximal sequence~$\gamma_0 \gamma_1 \gamma_2 \cdots$ of configurations~$\gamma_j \in \nats^3$ where $\gamma_0$ is the initial configuration and $\gamma_{j+1}$ is the successor configuration of $\gamma_j$, if $\gamma_j$ has a successor configuration.
This run is either finite (line~$k-1$ is reached) or infinite (line~$k-1$ is never reached).
In the former case, we say that $\mach$ terminates.
The halting problem for two-counter machines asks, given a two-counter machine~$\mach$, whether $\mach$ terminates when started in the initial configuration.

\begin{proposition}[\cite{Minsky67}]
The halting problem for two-counter machines is undecidable.
\end{proposition}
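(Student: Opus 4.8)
This is the classical theorem of Minsky, so the plan is to recall the standard reduction from the halting problem for Turing machines, whose undecidability I take for granted (it follows from the existence of a universal machine together with diagonalization). The goal is to show that a two-counter machine of exactly the form defined above is already Turing-complete; undecidability of its halting problem then follows immediately, since any decision procedure for termination of two-counter machines would, composed with the reduction, decide halting of Turing machines.

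The reduction proceeds in two stages. First I would simulate an arbitrary Turing machine by a counter machine with a fixed number $m$ of counters (a handful suffice). The idea is to store the two halves of the tape, read as numbers in a fixed base, in two counters; reading the scanned symbol, writing, and moving the head then correspond to taking remainders and to multiplying or dividing these numbers by the base while adjusting the least-significant digit. Since such arithmetic cannot be performed in place, the remaining counters serve as scratch space onto which a value is transferred while being scaled. The finite control of the Turing machine, i.e., its tape alphabet and transition table, is hard-coded into the finite instruction list and is tracked by the program counter $\pc$.

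The second and more delicate stage is to bring the number of counters down to exactly two. Here the key device is a Gödel encoding: the contents $c_1, \dots, c_m$ of the intermediate machine are stored in a single counter as the number $N = p_1^{c_1} \cdots p_m^{c_m}$, where $p_j$ is the $j$-th prime, while the second counter is pure scratch space. Incrementing the $j$-th virtual counter amounts to replacing $N$ by $p_j \cdot N$, decrementing it to replacing $N$ by $N / p_j$, and the test "is the $j$-th virtual counter zero?" amounts to testing whether $p_j$ divides $N$. Each such operation on $N$ is realized by repeatedly shuttling the value between the two physical counters using only \INC{\cdot}, \DEC{\cdot}, and \JZ{\cdot}{\cdot}: multiplication by the constant $p_j$ moves $N$ across while adding $p_j$ units per unit removed, and division with remainder by $p_j$ moves $N$ across while removing $p_j$ units at a time and branching on the leftover.

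The step I expect to be the main obstacle is precisely this last one: implementing multiplication and division by a fixed constant, together with the divisibility test, using only two counters and the three primitive instructions, while correctly handling the saturating semantics of \DEC{\cdot} and the zero tests at the boundaries. Once these macro-operations are available as fixed blocks of instructions, assembling them into a full simulation — one block per instruction of the intermediate machine, glued together through the program counter — is routine, and termination of the resulting two-counter machine coincides with halting of the original Turing machine, which yields undecidability.
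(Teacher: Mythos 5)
The paper offers no proof of this proposition---it is cited directly from Minsky---and your outline is precisely the classical argument that citation refers to: undecidability of Turing-machine halting, simulation by a fixed number of counters via base-$b$ encodings of the two tape halves, and then compression to two counters via the prime-power G\"odel encoding with one counter as scratch space. This is correct (the saturating semantics of \DEC{i} is harmless since every decrement in the standard macros is guarded by a preceding \JZ{i}{\ell'} test), so your proposal matches the proof the paper implicitly relies on.
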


In the following, we show that the halting problem for two-counter machines can be reduced to the reachability problem for NNCS by simulating the semantics of a two-counter machine by a NNCS.

\begin{theorem}\label{thm:undecidable}
    The reachability problem for NNCS is undecidable.
\end{theorem}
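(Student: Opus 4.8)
The plan is to reduce the halting problem for two-counter machines (undecidable by the preceding proposition) to the NNCS reachability problem. Given a machine $\mach$ with $k$ lines, I would encode a configuration $(\ell, c_0, c_1)$ directly as the point $(\ell, c_0, c_1) \in \R^3$, so that $\idim = \cdim = 3$, matching the ``three inputs and outputs'' claim. The plant is chosen to be trivial, namely the projection $\plant(\vec{x}, \vec{u}) = \vec{u}$ onto the control input, so that one iteration of the control loop collapses to $\NNCSit(\vec{x}) = \NN(\vec{x})$ and all the work is carried by the DNN. I then set $\init = \set{(0,0,0)}$ (a single point, hence a polyhedron) and $\target = \set{(\ell, c_0, c_1) \in \R^3 \mid \ell = k-1}$, the states whose program counter is the \stopp line. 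The goal is to build $\NN$ so that it sends the encoding of each reachable configuration to the encoding of its successor; then $\mach$ terminates iff some iterate $(\NNCSit)^t(0,0,0)$ lands in $\target$ for some $t \in \nats$, which yields undecidability.

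The core is to realize the successor function of $\mach$ by a ReLU network whose inputs are guaranteed to be nonnegative integers. Three gadgets suffice. First, a one-hot selector for the program counter: for each line $j$ the bump $\mathrm{sel}_j(\ell) = \max\set{0,\ 1 - \max\set{0,\ell-j} - \max\set{0,j-\ell}}$ is $1$ when $\ell = j$ and $0$ at every other integer. Second, a zero test for each counter, $z_i = \max\set{0,\ 1 - c_i}$, which is $1$ iff $c_i = 0$. Third, Boolean gates on these bounded quantities: for $s, z \in \set{0,1}$ the conjunction is $\max\set{0,\ s + z - 1}$, and a value $y \in [0,M]$ can be gated by a bit $s$ via $\max\set{0,\ y - M(1-s)}$. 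Using that exactly one selector is active, the new program counter is a sum of terms: a line $j$ with $\instr_j = \INC{i}$ or $\instr_j = \DEC{i}$ contributes $(j+1)\,\mathrm{sel}_j$, while a line with $\instr_j = \JZ{i}{\ell'}$ contributes $\ell'\,(\mathrm{sel}_j \wedge z_i) + (j+1)\,(\mathrm{sel}_j \wedge \neg z_i)$, and the \stopp line is made to map to itself. Since every factor is a $\set{0,1}$-bit and all coefficients are bounded constants, the update is a ReLU-expressible combination of conjunctions.

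The step I expect to be the main obstacle is updating the \emph{counters}, since counter values are unbounded and a ReLU network cannot gate an unbounded value by a selector bit (the gating identity above needs a bound $M$). The way around this is that a counter is only ever modified additively by $\pm 1$. Incrementing is harmless and linear: the new value is $c_i + \sum_{j\,:\,\instr_j = \INC{i}} \mathrm{sel}_j$. Decrementing looks dangerous, but the amount actually subtracted is $\min\set{c_i,1} = c_i - \max\set{0,\ c_i - 1} \in [0,1]$, a \emph{bounded} quantity, so it can be gated by the bit $a_i = \sum_{j\,:\,\instr_j = \DEC{i}} \mathrm{sel}_j$; the new value $c_i - \max\set{0,\ \min\set{c_i,1} - (1 - a_i)}$ equals $c_i$ when $a_i = 0$ and $\max\set{c_i - 1,\ 0}$ when $a_i = 1$, exactly matching the semantics of \DEC{i}. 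Thus no branching ever touches an unbounded value, and a \emph{constant} number of layers (only the width grows with $k$) implements the whole successor function, matching the fixed-depth claim.

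Finally I would argue correctness by induction on the iteration count. Because the weights are rational and all gadgets map integer inputs to integer outputs exactly (the selectors being exactly one-hot on integers, with no rounding), the invariant that $(\NNCSit)^t(0,0,0)$ is the encoding of the $t$-th configuration $\gamma_t$ of $\mach$ is preserved. Making the \stopp line a self-loop keeps $\NN$ total without affecting reachability of $\target$. Hence a target state is reached iff the run of $\mach$ ever reaches line $k-1$, i.e., iff $\mach$ terminates, which completes the reduction and establishes the theorem.
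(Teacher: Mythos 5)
Your proposal is correct and follows essentially the same route as the paper: a reduction from two-counter-machine halting with the trivial plant $\plant(\vec{x},\vec{u})=\vec{u}$, configurations encoded as triples in $\R^3$, a one-hot ReLU selector per line, a ReLU zero test, and the observation that decrement only requires gating the bounded quantity $\min\set{c_i,1}$. The only difference is bookkeeping — you combine per-line contributions as a gated sum using that exactly one selector fires, whereas the paper runs all instruction gadgets in parallel as identities and subtracts the $(k-2)$ surplus copies at the output layer — which does not change the substance of the argument.
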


\begin{proof}
    Fix some two-counter machine $\mach$ with $k$ instructions.
    We show how to construct a gadget for each instruction of $\mach$ (except for the $\stopp$ instruction), which we then combine into a DNN simulating one configuration update of $\mach$. 
    Thus, the reachability problem for NNCS (which involves the iterated application of the DNN) then allows to simulate the full run of $\mach$.

    Formally, the DNN implements a function from $\R^3 \to \R^3$ with the following property: If the three inputs encode a non-stopping configuration of the two-counter machine, then the three outputs encode the successor configuration.
    Note that, since the weights and biases of the DNN we construct are integral, the outputs given integral inputs are also integral.
    In the following, we often implicitly assume that inputs are integral when we explain the intuition behind our construction.

    Our construction of the DNN fits into the common architecture~\cite{GoodfellowBC16} that all hidden neurons use ReLU activations and all output neurons use identity activations.
    We let the plant component simply turn the control input into the new state ($\plant(\vec{x}, \vec{u}) = \vec{u}$), as the DNN already simulates $\mach$.

    In some more detail, for every instruction~$(\ell; \instr_\ell)$ of $\mach$, we construct one gadget simulating this instruction. 
    All these gadgets will be executed in parallel in one iteration of the DNN, but only one of them (determined by the current value of the program counter) will actually perform a computation.
    The other gadgets just compute the identity function for each of their inputs.
    Thus, in the end we need to subtract $(k-2) \cdot v$ from each output $v$.

    All gadgets have inputs named $\pc$ (representing the current value of the program counter), and $c_0$ and $c_1$ (representing the current counter values), as well as three outputs named $\pc'$ (representing the value of the program counter of the successor configuration), and $c_0'$ and $c_1'$ (representing the  counter values of the successor configuration). 
    To simplify our construction, we use an additional gadget that conceptually checks whether the value of the program counter is equal to some fixed line number~$\ell$. This gadget is shown in \figref{fig:gadget_auxiliary}. The output~$a_\ell$ of this gadget (which has only one input~$\pc$) satisfies
    \[
        a_\ell = \begin{cases}
            1 & \pc = \ell, \\
            0 & \pc \neq \ell.
        \end{cases}
    \]

    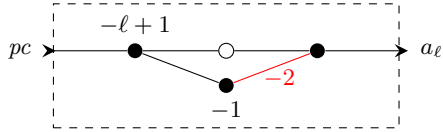
\begin{figure}[t]
        \centering
        \begin{tikzpicture}
	\node[i] (pc) {};
		\node[left=1mm of pc] {\pc};
	\node[r,right=1cm of pc] (n1) {};
		\node[above=0mm of n1] {$-\ell+1$};
	\node[ra,right=1cm of n1] (n2) {};
	\node[r,below=2.5mm of n2] (n3) {};
		\node[below=0mm of n3] {$-1$};
	\node[r,right=1cm of n2] (n4) {};
	\node[o,right=1cm of n4] (pco) {};
		\node[right=1mm of pco] {$a_\ell$};
	\node[fit margins={top=3mm,bottom=5mm},fit=(pc) (pco)] {};
	\draw (pc) -- (n1) -- (n2) -- (n4) -- (pco);
	\draw (n1) -- (n3);
	\draw[color=red] (n3) -- node[below,yshift=1mm,xshift=1mm]{$-2$} (n4);
	\foreach \x in {pc,pco} {
		\draw[arr] (\x.west) -- ($(\x.east)+(0.5mm,0mm)$);
	}
\end{tikzpicture}%
        \caption{Auxiliary gadget for instruction $\ell$. Here and in all later illustrations of DNNs, dots denote neurons, where filled dots use ReLU activations and empty dots can use either identity or ReLU activations (the choice is irrelevant since the value before the activation is nonnegative).
        Sometimes, as in this case, the empty dots are only present for a fully-connected architecture. Edge colors only serve the visual association with the weights. We omit weight~$1$ and bias~$0$ as well as connections with weight~$0$.}
        \label{fig:gadget_auxiliary}
    \end{figure}

    The outputs~$a_\ell$ of these auxiliary gadgets (we have one for each line number~$\ell$) will be fed into the other gadgets simulating the instructions.

    Next, we describe the instruction gadgets, where we restrict ourselves to the counter with index zero; the  counter with index one is treated in the analogous way.
    \figref{fig:gadgets} shows these gadgets together with the possible outputs.
    It is easy to verify that each gadget performs the corresponding computation whenever the input~$\pc$ is equal to $\ell$, and the identity function otherwise. 
    Let us stress that each gadget we construct depends both on the line number and the instruction.

    \begin{figure}[t]
        \centering
        \begin{subfigure}[b]{0.38\textwidth}
            \centering
            \begin{tikzpicture}
	\node[i] (c0) {};
		\node[left=1mm of c0] {$c_0$};
	\node[i,below=3mm of c0] (c1) {};
		\node[left=1mm of c1] {$c_1$};
	\node[i,below=3mm of c1] (pc) {};
		\node[left=1mm of pc] {\pc};
	\node[i,below=3mm of pc] (aj) {};
		\node[left=1mm of aj] {$a_\ell$};
	\node[ra,right=1cm of c0] (n1) {};
	\node[ra,right=1cm of c1] (n2) {};
	\node[ra,right=1cm of pc] (n3) {};
	\node[o,right=1cm of n1] (c0o) {};
		\node[right=1mm of c0o] {$c_0'$};
	\node[o,below=3mm of c0o] (c1o) {};
		\node[right=1mm of c1o] {$c_1'$};
	\node[o,below=3mm of c1o] (pco) {};
		\node[right=1mm of pco] {$\pc'$};
	\node[fit margins,fit=(c0o) (aj)] {};
	\draw (c0) -- (n1) -- (c0o);
	\draw (pc) -- (n3) -- (pco);
	\draw (aj) -- (n1);
	\draw (aj) -- (n3);
	\draw (c1) -- (n2) -- (c1o);
	\foreach \x in {c0,c1,pc,aj,c0o,c1o,pco} {
		\draw[arr] (\x.west) -- ($(\x.east)+(0.5mm,0mm)$);
	}
\end{tikzpicture}%
            \caption{$\ell: \INC{0}$ gadget.}
            \label{fig:gadget_inc}
        \end{subfigure}
        \hfill
        \begin{subfigure}[b]{0.60\textwidth}
            \centering
            \begin{tabular}{l@{\hspace*{3mm}}l@{\hspace*{3mm}}l@{\hspace*{3mm}}l}
                Condition & $x'$ & $y'$ & $\pc'$ \\
                \midrule
                $\pc = \ell$ & $c_0 + 1$ & $c_1$ & $\pc + 1$ \\
                $\pc \neq \ell$ & $c_0$ & $c_1$ & $\pc$ \\
            \end{tabular}
            \vspace*{3mm}
            \caption{$\ell: \INC{0}$ gadget's possible output values.}
            \label{fig:table_inc}
        \end{subfigure}
        \\[7mm]
        \begin{subfigure}[b]{0.38\textwidth}
            \centering
            \begin{tikzpicture}
	\node[i] (c0) {};
		\node[left=1mm of c0] {$c_0$};
	\node[i,below=3mm of c0] (c1) {};
		\node[left=1mm of c1] {$c_1$};
	\node[i,below=3mm of c1] (pc) {};
		\node[left=1mm of pc] {\pc};
	\node[i,below=3mm of pc] (aj) {};
		\node[left=1mm of aj] {$a_\ell$};
	\node[r,right=1cm of c0] (n1) {};
	\node[ra,right=1cm of c1] (n2) {};
	\node[ra,right=1cm of pc] (n3) {};
	\node[o,right=1cm of n1] (c0o) {};
		\node[right=1mm of c0o] {$c_0'$};
	\node[o,below=3mm of c0o] (c1o) {};
		\node[right=1mm of c1o] {$c_1'$};
	\node[o,below=3mm of c1o] (pco) {};
		\node[right=1mm of pco] {$\pc'$};
	\node[fit margins,fit=(c0o) (aj)] {};
	\draw (c0) -- (n1) -- (c0o);
	\draw (pc) -- (n3) -- (pco);
	\draw[color=red] (aj) -- node[above,near end,xshift=-2mm,yshift=-1mm] {$-1$} (n1);
	\draw (aj) -- (n3);
	\draw (c1) -- (n2) -- (c1o);
	\foreach \x in {c0,c1,pc,aj,c0o,c1o,pco} {
		\draw[arr] (\x.west) -- ($(\x.east)+(0.5mm,0mm)$);
	}
\end{tikzpicture}%
            \caption{$\ell: \DEC{0}$ gadget.}
            \label{fig:gadget_dec}
        \end{subfigure}
        \hfill
           \begin{subfigure}[b]{0.60\textwidth}
            \centering
            \begin{tabular}{l@{\hspace*{3mm}}l@{\hspace*{3mm}}l@{\hspace*{3mm}}l}
                Condition & $c_0'$ & $c_1'$ & $\pc'$ \\
                \midrule
                $\pc = \ell$ & $\max\set{c_0-1,0}$ & $c_1$ & $\pc + 1$ \\
                $\pc \neq \ell$ & $c_0$ & $c_1$ & $\pc$ \\
            \end{tabular}
            \vspace*{3mm}
            \caption{$\ell: \DEC{0}$ gadget's possible output values.}
            \label{fig:table_dec}
        \end{subfigure}
        \\[7mm]

        \begin{subfigure}[b]{0.38\textwidth}
            \centering
            \begin{tikzpicture}
	\node[i] (c0) {};
		\node[left=1mm of c0] {$c_0$};
	\node[i,below=3mm of c0] (c1) {};
		\node[left=1mm of c1] {$c_1$};
	\node[i,below=3mm of c1] (pc) {};
		\node[left=1mm of pc] {\pc};
	\node[i,below=3mm of pc] (aj) {};
		\node[left=1mm of aj] {$a_\ell$};
	\node[ra,right=1cm of c0] (n11) {};
	\node[ra,right=1cm of n11] (n12) {};
	\node[ra,right=1cm of n12] (n13) {};
	\node[ra,right=1cm of c1] (n21) {};
	\node[ra,right=1cm of n21] (n22) {};
	\node[ra,right=1cm of n22] (n23) {};
	\node[ra,right=1cm of pc] (n1) {};
	\node[ra,at=(aj-|n1)] (n4) {};
	\node[r,below=2mm of n4] (n5) {};
		\node[below=0mm of n5] {$1$};
	\node[ra,right=1cm of n1] (n2) {};
	\node[r,right=1cm of n5] (n6) {};
		\node[below=0mm of n6] {$-1$};
	\node[r,right=1cm of n2] (n3) {};
	\node[o,right=1cm of n3] (pco) {};
		\node[right=1mm of pco] {$\pc'$};
	\node[o,at=(c1-|pco)] (c1o) {};
		\node[right=1mm of c1o] {$c_1'$};
	\node[o,at=(c0-|c1o)] (c0o) {};
		\node[right=1mm of c0o] {$c_0'$};
	\node[fit margins={bottom=5mm},fit=(c0o) (aj)] {};
	\draw (c0) -- (n11) -- (n12) -- (n13) -- (c0o);
	\draw[red] (c0) -- node[near start,above,xshift=2mm,yshift=-.5mm] {$-1$} (n5);
	\draw (c1) -- (n21) -- (n22) -- (n23) -- (c1o);
	\draw (pc) -- (n1) -- (n2) -- (n3) -- (pco);
	\draw (aj) -- (n1);
	\draw (aj) -- (n4) -- (n6);
	\draw (n5) -- (n6);
	\draw[blue] (n6) -- node[right,yshift=-1mm] {$\ell'-\ell-2$} (n3);
	\foreach \x in {c0,c1,pc,aj,c0o,c1o,pco} {
		\draw[arr] (\x.west) -- ($(\x.east)+(0.5mm,0mm)$);
	}
\end{tikzpicture}%
            \caption{$\ell: \JZ{0}{\ell'}$ gadget.}
            \label{fig:gadget_jz}
        \end{subfigure}
        \hfill
        \begin{subfigure}[b]{0.60\textwidth}
            \centering
            ~~~~~~~~
            \begin{tabular}{l@{\hspace*{3mm}}l@{\hspace*{3mm}}l@{\hspace*{3mm}}l}
                Condition & $c_0'$ & $c_1'$ & $\pc'$ \\
                \midrule
                $\pc = \ell \text{ and } c_0 = 0$ & $c_0$ & $c_1$ & $\ell'$ \\
                $\pc = \ell \text{ and } c_0 \neq 0$ & $c_0$ & $c_1$ & $\pc + 1$ \\
                $\pc \neq \ell$ & $c_0$ & $c_1$ & $\pc$ \\
            \end{tabular}
            \vspace*{6mm}
            \caption{$\ell: \JZ{0}{\ell'}$ gadget's possible output values.}
            \label{fig:table_jz}
        \end{subfigure}
        \caption{The gadgets for the three instructions. See \figref{fig:gadget_auxiliary} for further explanations.}
        \label{fig:gadgets}
    \end{figure}
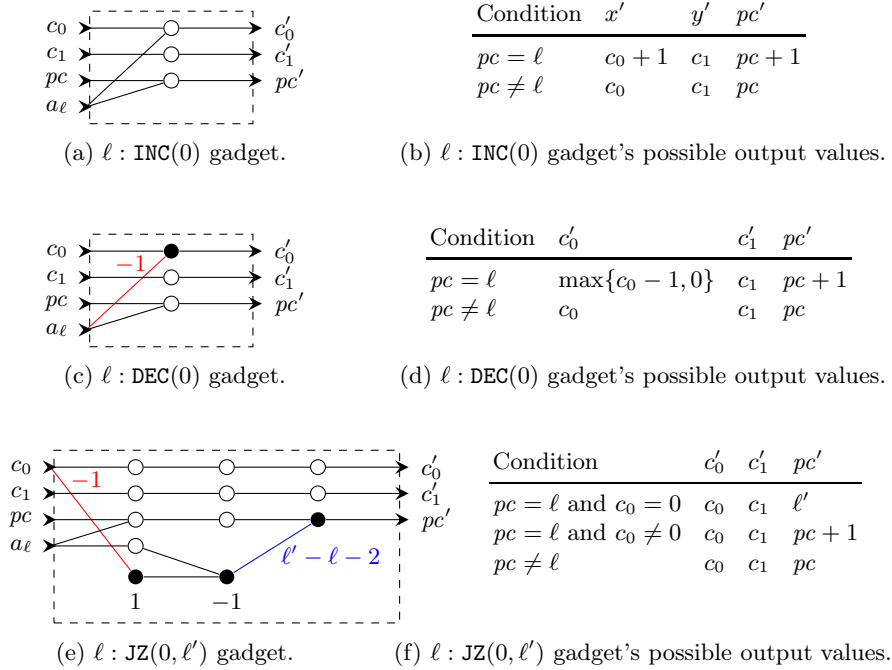

    The final layout of the gadgets is shown in \figref{fig:complete_construction}.
    Essentially, each auxiliary gadget is wired to the corresponding instruction gadget, and at the end we need to subtract the inputs $k-2$ times as described above.
    Note that there is no gadget for the $\stopp$ instruction (instruction~$k-1$ in $\mach$).
    When $\pc$ is equal to $k-1$, then the DNN computes the identity function: First, all $a_\ell$ are equal to $0$; Hence, each of the $k-1$ instruction gadgets~$I_\ell$ computes the identity function; after the subtraction, we are indeed left with the identity.

    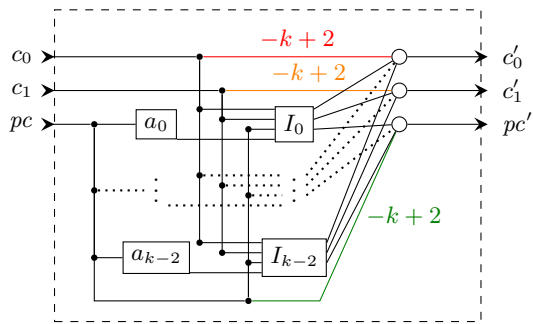
\begin{figure}[t]
        \centering
        \begin{tikzpicture}
	\node[i] (c0) {};
		\node[left=1mm of c0] {$c_0$};
	\node[i,below=4mm of c0] (c1) {};
		\node[left=1mm of c1] {$c_1$};
	\node[i,below=4mm of c1] (pc) {};
		\node[left=1mm of pc] {\pc};
	\node[c,right=5mm of pc] (pcaux1) {};
	\node[b,right=5mm of pcaux1] (a1) {$a_0$};
	\node[below=2mm of a1] (dotsa) {\raisebox{2mm}{$\vdots$}};
	\node[b,below=2mm of dotsa] (ak) {$a_{k-2}$};
	\coordinate[below=23mm of pcaux1] (pcaux2);
	\node[c,right=20mm of pcaux2] (pcaux3) {};
	\coordinate[right=9mm of pcaux3] (pcaux4);
	\node[c,right=19mm of c0] (c0aux) {};
	\node[c,right=22mm of c1] (c1aux) {};
	\node[b,right=13mm of a1] (i1) {$I_0$};
	\node[at=(dotsa-|i1)] (dotsi) {\raisebox{2mm}{$\vdots$}};
	\node[b,at=(ak-|i1)] (ik) {$I_{k-2}$};
	\node[ra,right=25mm of c0aux] (c0r) {};
	\node[ra,at=(c0r|-c1)] (c1r) {};
	\node[ra,at=(c0r|-pc)] (pcr) {};
	\node[o,right=1cm of c0r] (c0o) {};
		\node[right=1mm of c0o] {$c_0'$};
	\node[o,at=(c0o|-c1)] (c1o) {};
		\node[right=1mm of c1o] {$c_1'$};
	\node[o,at=(c0o|-pc)] (pco) {};
		\node[right=1mm of pco] {$\pc'$};
	\node[fit margins={top=3mm,bottom=13mm},fit=(c0o) (pc)] {};
	\draw (c0) -- (c0aux);
	\draw[red] (c0aux) -- node[above] {$-k+2$} (c0r);
	\draw (c0aux) |- node[c] {} ($(i1.west) + (0mm,2mm)$);
	\coordinate (c0aux3) at=($(dotsi.west) + (0mm,2mm)$);
	\node[c,at=(c0aux|-c0aux3)] (c0aux2) {};
	\draw[dotted,thick] (c0aux2) -- (c0aux3);
	\draw (c0aux) |- node[c] {} ($(ik.west) + (0mm,2mm)$);
	\draw (c1) -- (c1aux);
	\draw[orange] (c1aux) -- node[above] {$-k+2$} (c1r);
	\draw (c1aux) |- node[c] {} ($(ik.west) + (0mm,0.66mm)$);
	\coordinate (c1aux3) at=($(dotsi.west) + (0mm,0.66mm)$);
	\node[c,at=(c1aux|-c1aux3)] (c1aux2) {};
	\draw[dotted,thick] (c1aux2) -- (c1aux3);
	\draw (c1aux) |- node[c] {} ($(i1.west) + (0mm,0.66mm)$);
	\draw (pc) -- (pcaux1) -- (pcaux2) -- (pcaux3);
	\draw (pcaux1) |- (a1);
	\node[c,at=(pcaux1|-dotsa)] (pcaux5) {};
	\draw[dotted,thick] (pcaux5) -- (dotsa);
	\draw (pcaux1) |- node[c] {} (ak);
	\draw (pcaux3) |- node[c] {} ($(i1.west) + (0mm,-0.66mm)$);
	\coordinate (pcaux7) at=($(dotsi.west) + (0mm,-0.66mm)$);
	\node[c,at=(pcaux3|-pcaux7)] (pcaux6) {};
	\draw[dotted,thick] (pcaux6) -- (pcaux7);
	\draw (pcaux3) |- node[c] {} ($(ik.west) + (0mm,-0.66mm)$);
	\draw[darkgreen] (pcaux3) -- (pcaux4) -- node[right] {$-k+2$} (pcr);
	\draw ($(a1.east) + (0mm,-2mm)$) -- ($(i1.west) + (0mm,-2mm)$);
	\draw[dotted,thick] ($(dotsa.east) + (0mm,-2mm)$) -- ($(dotsi.west) + (0mm,-2mm)$);
	\draw ($(ak.east) + (0mm,-2mm)$) -- ($(ik.west) + (0mm,-2mm)$);
	\draw ($(i1.east) + (0mm,2mm)$) -- (c0r);
	\draw ($(i1.east) + (0mm,0.66mm)$) -- (c1r);
	\draw ($(i1.east) + (0mm,-0.66mm)$) -- (pcr);
	\draw[dotted,thick] ($(dotsi.east) + (0mm,2mm)$) -- (c0r);
	\draw[dotted,thick] ($(dotsi.east) + (0mm,0.66mm)$) -- (c1r);
	\draw[dotted,thick] ($(dotsi.east) + (0mm,-0.66mm)$) -- (pcr);
	\draw ($(ik.east) + (0mm,2mm)$) -- (c0r);
	\draw ($(ik.east) + (0mm,0.66mm)$) -- (c1r);
	\draw ($(ik.east) + (0mm,-0.66mm)$) -- (pcr);
	\draw (c0r) -- (c0o);
	\draw (c1r) -- (c1o);
	\draw (pcr) -- (pco);
	\foreach \x in {c0,c1,pc,c0o,c1o,pco} {
		\draw[arr] (\x.west) -- ($(\x.east)+(0.2mm,0mm)$);
	}
\end{tikzpicture}%
        \caption{Complete construction. Each box represents an auxiliary gadget~$a_\ell$ resp.\ an instruction gadget $I_\ell$. Small dots denote junctions of connections and have no further semantics. The last layer is the output layer (with identity activations).}
        \label{fig:complete_construction}
    \end{figure}

    Finally, the initial input to the DNN is $\vec{x}_0 = (0, 0, 0)$ (representing the initial configuration) and the target set is $\target = \{(k-1, c_0, c_1) \mid c_0,c_1\ge 0\}$, where $k-1$ is the last instruction number ($\stopp$) of $\mach$.
    Clearly, $\mach$ terminates if and only if the NNCS reaches a state satisfying $\target$ when started in $X_0=\set{\vec{x}_0}$. \qed
\end{proof}

We note that the DNNs simulating two-counter machines are rather simple.

\begin{corollary}
    The NNCS reachability problem remains undecidable for DNNs with integral weights, $3$ input and output dimensions, $6$ hidden layers, a singleton initial set, and a target set~$o = v$ for some output neuron~$o$ and constant~$v \in \nats$.
\end{corollary}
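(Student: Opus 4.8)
The plan is to establish the corollary by inspecting the DNN constructed in the proof of Theorem~\ref{thm:undecidable} and verifying that it already meets every stated restriction, after folding some redundant structure into the output layer. Three of the five properties are immediate: the DNN has input and output dimension~$3$ by construction, as its inputs are $\pc, c_0, c_1$ and its outputs are $\pc', c_0', c_1'$; the initial set is the singleton $\{(0,0,0)\}$; and all weights and biases occurring in the gadgets (the auxiliary gadget uses $1, -1, -2$ and biases $-\ell+1, -1$; the instruction gadgets add weights such as $\ell'-\ell-2$; the final subtraction uses $-k+2$) are integers, since $\ell, \ell', k \in \nats$.

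For the target set, first I would note that the initial state $(0,0,0)$ has non-negative counters and that every gadget preserves non-negativity of $c_0$ and $c_1$: the decrement gadget computes $\max\set{c_i - 1, 0}$, while all other gadgets either leave the counters unchanged or increment them. By induction, every reachable state thus satisfies $c_0, c_1 \ge 0$, so these two constraints in the original target $\set{(k-1, c_0, c_1) \mid c_0, c_1 \ge 0}$ are redundant along the run. The target therefore reduces to the single equality $\pc' = k-1$, which is a constraint of the form $o = v$ with $o$ the output neuron for $\pc'$ and $v = k-1 \in \nats$.

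The remaining and most delicate point is the count of six hidden layers, for which I would trace the longest input-to-output path in \figref{fig:complete_construction}. The auxiliary gadget of \figref{fig:gadget_auxiliary} computes each $a_\ell$ in exactly three hidden layers (the neuron $n_1$; then a parallel pair that propagates $n_1$ and computes $\max\set{n_1 - 1, 0}$; then their combination), and all these auxiliary gadgets run in parallel while $c_0, c_1, \pc$ are forwarded by identity neurons. Among the instruction gadgets of \figref{fig:gadgets}, the deepest is the one for $\JZ{0}{\ell'}$, which again uses three hidden layers; the increment and decrement gadgets are shallower, and I would pad them with identity layers so that every gadget ends at the same depth. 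Composing the three auxiliary layers with the three instruction-gadget layers yields six hidden layers; the concluding summation over the gadget outputs together with the subtraction of $(k-2)$ copies of each input is a single linear map and is therefore absorbed into the output layer rather than adding a seventh hidden layer.

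The hard part will be the bookkeeping in this last step. The per-gadget output nodes drawn in \figref{fig:gadgets} are only illustrative and must be identified with the neurons of the gadget's last hidden layer (equivalently, with the linear combination performed at the output layer), so that they are not double-counted; and the shallower gadgets must be padded consistently so that six hidden layers suffice simultaneously for all three instruction types. Once this alignment is made explicit, the bound of six hidden layers follows, and together with the three immediate properties and the target-set reduction above, the corollary is proved.
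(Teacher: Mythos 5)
Your proposal is correct and matches the paper's (implicit) argument: the corollary is stated without a separate proof and is meant to be read off the construction in Theorem~\ref{thm:undecidable}, exactly as you do — integral weights, dimension~$3$, and the singleton $\{(0,0,0)\}$ are immediate, the depth count of $3+3$ hidden layers with the final subtraction absorbed into the output layer is right, and your observation that reachable states always have nonnegative counters (so the target collapses to $\pc' = k-1$, i.e., $o = v$) is the intended justification for the target-set form.
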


\begin{figure}[t]
    \centering
    \begin{subfigure}[b]{0.45\textwidth}
        \centering
        \begin{tikzpicture}
	\node[i] (c01) {};
		\node[left=1mm of c01] {$c_0^1$};
	\node[i,below=3mm of c01] (c11) {};
		\node[left=1mm of c11] {$c_1^1$};
	\node[i,below=3mm of c11] (pc1) {};
		\node[left=1mm of pc1] {$\pc^1$};
	\node[below=24mm of pc1] (dotsin) {\!\!\!$\vdots$};
	\node[i,below=4mm of dotsin] (i17) {};
		\node[left=1mm of i17] {$i_1^7$};
	\node[i,below=3mm of i17] (i27) {};
		\node[left=1mm of i27] {$i_2^7$};
	\node[i,below=3mm of i27] (i37) {};
		\node[left=1mm of i37] {$i_3^7$};
	%
	\node[right=1cm of c01] (l11) {};
	\node[right=1cm of c11] (l12) {};
	\node[right=1cm of pc1] (l13) {};
	\node[layer,fit=(l11) (l13)] (l1) {\raisebox{-2mm}{$\ell_1$}};
	\node[at=(dotsin-|l1)] {$\vdots$};
	\node[right=1cm of i17] (l71) {};
	\node[right=1cm of i27] (l72) {};
	\node[right=1cm of i37] (l73) {};
	\node[layer,fit=(l71) (l73)] (l7) {\raisebox{-2mm}{$\ell_7$}};
	%
	\node[o,right=1cm of l11] (o11) {};
		\node[right=1mm of o11] {$o_0^1$};
	\node[o,right=1cm of l12] (o12) {};
		\node[right=1mm of o12] {$o_1^1$};
	\node[o,right=1cm of l13] (o13) {};
		\node[right=1mm of o13] {$o_2^1$};
	\node[o,below=5mm of o13] (o21) {};
		\node[right=1mm of o21] {$o_0^2$};
	\node[o,below=3mm of o21] (o22) {};
		\node[right=1mm of o22] {$o_1^2$};
	\node[o,below=3mm of o22] (o23) {};
		\node[right=1mm of o23] {$o_2^2$};
	\node[o,below=3mm of o23] (ovdots) {};
		\node[right=1mm of ovdots] {$\vdots$};
	\node[o,below=3mm of ovdots] (o2k) {};
		\node[right=1mm of o2k] {$o_{k}^2$};
	\node[at=(dotsin-|o23)] {~~$\vdots$};
	\node[o,right=1cm of l71] (o71) {};
		\node[right=1mm of o71] {$o_0^7$};
	\node[o,right=1cm of l72] (o72) {};
		\node[right=1mm of o72] {$o_1^7$};
	\node[o,right=1cm of l73] (o73) {};
		\node[right=1mm of o73] {$o_2^7$};
	\node[fit margins={top=2mm,bottom=2mm},fit=(o11) (i37)] {};
	%
	\draw (c01) -- (l11-|l1.west);
	\draw (l11-|l1.east) -- (o21.west);
	\draw (c11) -- (l1);
	\draw (l1.east) -- (o22.west);
	\draw (pc1) -- (l13-|l1.west);
	\draw (l13-|l1.east) -- (o23.west);
	\draw[dotted,thick] (l13-|l1.east) -- (ovdots);
	\draw (l13-|l1.east) -- (o2k.west);
	\draw (i17) -- (l71-|l7.west);
	\draw[red] (l71-|l7.east) -- (o11.west);
	\draw (i27) -- (l7);
	\draw[red] (l7.east) -- (o12.west);
	\draw (i37) -- (l73-|l7.west);
	\draw[red] (l73-|l7.east) -- (o13.west);
	\coordinate (a1) at ($(l71-|l7.east)+(0,15mm)$);
	\coordinate (a2) at ($(l72-|l7.east)+(0,15mm)$);
	\coordinate (a3) at ($(l73-|l7.east)+(0,15mm)$);
	\draw[dotted,thick] (a1) -- (o71.west);
	\draw[dotted,thick] (a2) -- (o71.west);
	\draw[dotted,thick] (a3) -- (o71.west);
	\draw[dotted,thick] (a1) -- (o72.west);
	\draw[dotted,thick] (a2) -- (o72.west);
	\draw[dotted,thick] (a3) -- (o72.west);
	\draw[dotted,thick] (a1) -- (o73.west);
	\draw[dotted,thick] (a2) -- (o73.west);
	\draw[dotted,thick] (a3) -- (o73.west);
	\foreach \x in {c01,c11,pc1,i17,i27,i37,o11,o12,o13,o21,o22,o23,o2k,o71,o72,o73} {
		\draw[arr] (\x.west) -- ($(\x.east)+(0.5mm,0mm)$);
	}
\end{tikzpicture}%
        \caption{Stacking of previous hidden layers.}
        \label{fig:collapse_full}
    \end{subfigure}
    \hspace*{1cm}
    \begin{subfigure}[b]{0.35\textwidth}
        \centering
        \begin{tikzpicture}
	\node[i] (i1) {};
		\node[left=1mm of i1] {$m_i^1$};
	\node[i,below=3mm of i1] (i2) {};
		\node[left=1mm of i2] {$m_i^2$};
	\node[below=3mm of i2] (dotsin) {\!\!\!$\vdots$};
	\node[i,below=4mm of dotsin] (i7) {};
		\node[left=1mm of i7] {$m_i^7$};
	%
	\node[ra,right=1cm of i1] (l1) {};
	\node[ra,right=1cm of i2] (l2) {};
	\node[at=(dotsin-|l2)] {$\vdots$};
	\node[ra,right=1cm of i7] (l7) {};
	\node[above=3mm of l7] (l6) {};
	%
	\node[o,right=1cm of l1] (o1) {};
		\node[right=1mm of o1] {$m_o^1$};
	\node[o,right=1cm of l2] (o2) {};
		\node[right=1mm of o2] {$m_o^2$};
	\node[below=3mm of o2] (o3) {};
	\node[at=(dotsin-|o2)] {~~$\vdots$};
	\node[o,right=1cm of l7] (o7) {};
		\node[right=1mm of o7] {$m_o^7$};
	\node[fit margins={top=2mm,bottom=2mm},fit=(o1) (i7)] {};
	%
	\draw (i1) -- (l1) -- (o2.west);
	\draw (i2) -- (l2);
	\draw[dotted,thick] (l2) -- (o3);
	\draw (i7) -- (l7) -- (o1.west);
	\draw[dotted,thick] (l6) -- (o7.west);
	\foreach \x in {i1,i2,i7,o1,o2,o7} {
		\draw[arr] (\x.west) -- ($(\x.east)+(0.5mm,0mm)$);
	}
\end{tikzpicture}%
        \vspace*{15mm}
        \caption{Modulo-$7$ counter.}
        \label{fig:collapse_modulo}
    \end{subfigure}
    \caption{Construction with a single hidden layer.}
    \label{fig:collapse}
\end{figure}
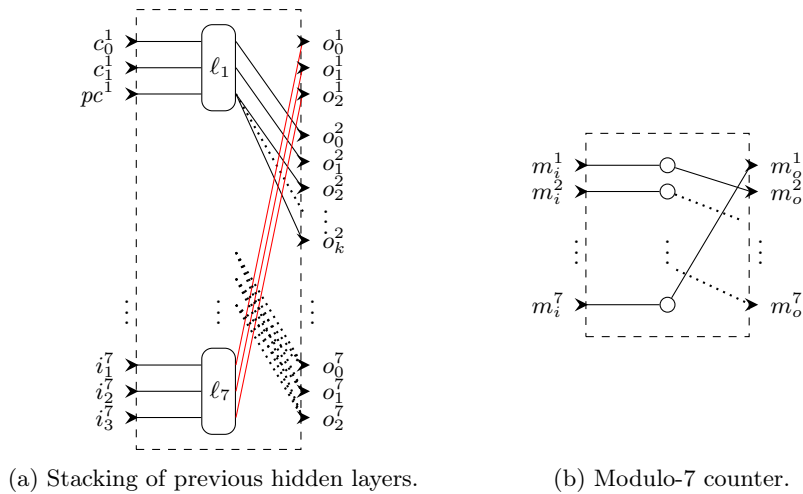

One may wonder whether the six hidden layers are necessary.
In general, one cannot hope to obtain a small neural network when removing layers~\cite{AroraBMM18}.
However, since we can iterate the NNCS, and the plant model is not interfering, we can reduce one iteration of a DNN $\NN$ with six hidden layers (constructed in the proof above) to seven iterations of a DNN~$\NN'$ with one hidden layer.\footnote{Typically, neural networks with only one hidden layer are not called \emph{deep}.}
\figref{fig:collapse} shows a sketch of the construction idea.
Essentially, we take the hidden layers of $\NN$ and stack them as one wide hidden layer in $\NN'$.
(For instance, layer~$\ell_1$ has width $k + 1$.)
We refer to each of these hidden layers as a \emph{track}.
For each track, we need to add input and output dimensions corresponding to the number of neurons in the respective previous and next hidden layers.
The output of track~$j$ is fed to track~$j + 1$ (and the output of the last track is fed to the first track).

When presented with an input vector $\vec{x}_0$ of appropriate size, the first track performs the computation of the first hidden layer and feeds its output to the second track, and so on.
After seven iterations, the output of the last track will equal the output of the seven-layer DNN $\NN$ after the first iteration.
This output is then used as the input of the first track again and the process continues.

Finally, we need to make sure that the other tracks do not accidentally produce an output that leads to a target state between multiples of seven iterations.
In order to only consider outputs in every seventh iteration, we use the additional gadget shown in \figref{fig:collapse_modulo}.
This gadget has seven inputs and outputs and is to be stacked below the other DNN.
When the initial input is $(1, 0, 0, 0, 0, 0, 0)$, the $1$ is propagated to the second index, and so on, until it arrives back at the first index after seven iterations.

The target set $\target$ now simply needs to get extended to arbitrary values in the auxiliary dimensions, except for the seven-last entry ($m_i^1$), which has to equal one.
Formally: $\target' = (\target \land m_i^1 \leq 1 \land -m_i^1 \leq -1)$.

In summary, by scaling the number of inputs and outputs with $\mach$, we obtain a DNN with one (wide) hidden layer.

\begin{corollary}\label{cor:one_hidden_layer}
    The NNCS reachability problem remains undecidable for DNNs with integral weights, one hidden layer, a singleton initial set, and a target set~$o = v$ for some output neuron~$o$ and constant~$v \in \nats$.
\end{corollary}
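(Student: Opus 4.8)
The plan is to reuse the DNN~$\NN$ with six hidden layers constructed for the preceding corollary essentially verbatim, trading its depth for additional iterations of the control loop. Since~$\NN$ has six hidden layers plus one output layer, I regard it as a composition $\ell_7 \circ \cdots \circ \ell_1$ of seven layer functions. I then build a single-hidden-layer DNN~$\NN'$ whose hidden layer consists of seven disjoint \emph{tracks}, where track~$j$ holds a private copy of the neurons of~$\ell_j$, reading only its own block of input coordinates and ignoring all others (\figref{fig:collapse_full}). The output layer of~$\NN'$, using identity activations, realizes a fixed coordinate permutation that moves the output produced by track~$j$ into the input block of track~$j+1$, cyclically sending track~$7$ back to track~$1$. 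Iterating~$\NN'$ with the trivial plant $\plant(\vec{x},\vec{u})=\vec{u}$ thus pushes one data packet exactly one layer further through~$\NN$ per iteration.

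Concretely, I would first describe the stacked hidden layer and the permutation wiring, then establish the simulation invariant by induction on~$m$: initializing the input block of track~$1$ with $\vec{x}_0=(0,0,0)$ and all other blocks with~$0$, the input block of track~$1$ after $7m$ iterations equals $\NN^m(\vec{x}_0)$, i.e.\ the $m$-th configuration of~$\mach$. Because each track reads only its own block, the partially-processed (and hence meaningless) packets sitting in the other blocks at intermediate iterations never interfere with the genuine computation carried in track~$1$. One point that needs care is that $\ell_7$, the output layer of~$\NN$, uses identity activations whereas all hidden neurons of~$\NN'$ use ReLU; this is harmless because the relevant quantities ($\pc$, $c_0$, $c_1$, and all intermediate gadget values) are nonnegative, so ReLU agrees with the identity there, exactly as already noted for the empty dots in the gadget figures.

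The remaining and most delicate issue is that the reachability problem inspects the state after \emph{every} iteration, so I must guarantee that the off-phase intermediate states are never mistaken for a halting configuration, and I must compress acceptance into a single equality $o=v$. To mark the phase I stack the one-hot modulo-$7$ counter of \figref{fig:collapse_modulo}, initialized to $(1,0,\dots,0)$, whose bit $m_i^1$ equals~$1$ precisely at the iterations $0,7,14,\dots$ where track~$1$ holds a true configuration. To fuse termination and phase-alignment into one equality I add three hidden neurons computing $\max\set{\pc-(k-1),0}$, $\max\set{(k-1)-\pc,0}$, and $\max\set{1-m_i^1,0}$ (with $\pc$ and $m_i^1$ input coordinates), together with one output neuron~$o$ summing them, so that $o = |\pc-(k-1)| + (1-m_i^1)$; this is nonnegative and vanishes iff $\pc=k-1$ and $m_i^1=1$, while off-phase $m_i^1=0$ forces $o\ge 1$ irrespective of the garbage in the $\pc$ coordinate, ruling out spurious acceptance. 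With the singleton initial state, the trivial plant, integral weights throughout (track weights from~$\NN$, $0/1$ weights for the permutation and the counter, and $\pm 1,\pm(k-1)$ for the three extra neurons), and target $o=0$, the NNCS reaches the target iff~$\mach$ halts. The main obstacle I anticipate is exactly this interleaving bookkeeping: simultaneously proving the $7m$-step invariant and certifying that no off-phase state satisfies $o=0$, while verifying that the tracks, the permutation, the counter, and the acceptance neuron all coexist within a single hidden layer.
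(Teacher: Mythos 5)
Your construction is essentially the paper's: the same stacking of the seven layers of $\NN$ into parallel tracks of one wide hidden layer, the same cyclic wiring feeding track~$j$'s output to track~$j+1$'s input block, and the same one-hot modulo-$7$ counter to mask the off-phase iterations, with the same $7m$-step simulation invariant. Your only genuine addition is the aggregating output neuron $o = |\pc-(k-1)| + (1-m_i^1)$, which compresses the halting test and the phase bit into the single equality $o=0$; the paper instead keeps the conjunction $\target' = (\target \wedge m_i^1 \leq 1 \wedge -m_i^1 \leq -1)$, so your variant is a harmless (indeed, slightly more faithful) realization of the corollary's ``target set $o=v$ for a single output neuron'' phrasing.
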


\section{Semi-decidability}\label{sec:demi_decidable}

In this section, we show that the NNCS reachability problem is semi-decidable for a particular class of plants.
Indeed, from a single initial state $\vec{x}_0$, we can enumerate all states $\NNCSit(\vec{x}_0)^k$ reachable in $k$ iterations and for each of them decide membership in the target polyhedron $\target$.
However, since we allow for an initial set $\init$, this algorithm is not effective.

The image of a polyhedron under a ReLU DNN is a (finite) union of polyhedra~\cite{MontufarPCB14}.
If we choose a class of plants with the same property, we obtain an effective algorithm again.
In what follows, we show a more general result by using an automaton encoding of DNNs from~\cite{SalzerABL22}.
This will allow us to more abstractly consider a class of plants that is definable in the same automaton formalism.

\medskip

We slightly deviate from the original approach by Sälzer et al.~\cite{SalzerABL22} in that we use a more expressive automaton model, as we are (unlike them) not bothered with efficiency considerations (since our problem is undecidable).

\begin{definition}[Büchi automaton]
    A (nondeterministic) \emph{Büchi automaton} (NBA) $\A = (Q, \Sigma, q_0, \delta, F)$ consists of a finite set~$Q$ of states, a finite alphabet~$\Sigma$, an initial state $q_0 \in Q$, a transition relation $\delta \subseteq Q \times \Sigma \times Q$, and a set of accepting states $F \subseteq Q$.

    A run on an infinite word $w = a_0 a_1 \dots$ is an infinite sequence of states $q_0, q_1, \dots$ starting in the initial state and satisfying $(q_i, a_i, q_{i+1}) \in \delta$ for all $i \geq 0$. A run is accepting if $q_i \in F$ for infinitely many $i$. The language of $\A$ is
    \[
        L(\A) = \{w \in \Sigma^\omega \mid \text{$A$ has an accepting run on $w$}\}.
    \]

    A language is $\omega$-regular if there exists an NBA that accepts it.
\end{definition}

In the following, we recall an effective encoding of real numbers in NBA from~\cite{SalzerABL22}.
Let $\Sigma = \{+, -, 0, 1, .\}$.
A word $w = s a_n \dots a_0 . b_0 b_1 \dots$ with $n \geq 0$, $s \in \{+, -\}$, $a_i, b_i \in \{0, 1\}$ encodes the real value
\[
    \dec(w) = (-1)^{\sign(s)} \cdot \left( \sum_{i=0}^n a_i \cdot 2^i + \sum_{i=0}^\infty b_i \cdot 2^{-(i+1)} \right)
\]
where $\sign(s) = 0$ if $s = +$ and $\sign(s) = 1$ if $s = -$.
As usual, the word encoding is not unique, but the decoding is~\cite[Page 5]{SalzerABL22}.

Now, we switch to a word encoding of multiple numbers by using a product alphabet.
A symbol over this product alphabet $\Sigma^k$ is a $k$-vector of symbols.
A word over $\Sigma^k$ is well-formed if it is of the form
\[
    w =
    \begin{bmatrix} s_1 \\ \vdots \\ s_k \end{bmatrix}
    \begin{bmatrix} a_{1,n} \\ \vdots \\ a_{k,n} \end{bmatrix}
    \cdots
    \begin{bmatrix} a_{1,0} \\ \vdots \\ a_{k,0} \end{bmatrix}
    \begin{bmatrix} . \\ \vdots \\ . \end{bmatrix}
    \begin{bmatrix} b_{1,0} \\ \vdots \\ b_{k,0} \end{bmatrix}
    \begin{bmatrix} b_{1,1} \\ \vdots \\ b_{k,1} \end{bmatrix}
    \cdots
\]
where $s_i \in \{+, -\}$, $a_{i,j}, b_{i,h} \in \{0, 1\}$ for $i = 1, \dots, k$, $j = 0, \dots, n$, and $h = 0, 1, \dots$.
In other words, the signs and the point are aligned, which can be achieved by filling up with leading zeros.
The language~$\wff{k}$ of well-formed words is $\omega$-regular~\cite{SalzerABL22}.
The selection of a single component $i \in \{1, \dots, k\}$ is obtained in the obvious way:
\[
    w_i = s_i a_{i,n} \dots a_{i,0} \, . \, b_{i,0} b_{i,1} \dots
\]

If an NBA over $\Sigma^k$ accepts only well-formed words, then we can understand its language as a relation over $\R^k$.
Furthermore, linear constraints are also $\omega$-regular~\cite{SalzerABL22}.
Thus, as NBA are closed under intersection and union, (finite unions of) polyhedra are also $\omega$-regular.
Finally, we can also use NBAs to encode functions~$f\colon \R^m \rightarrow \R^n$ via their graphs, which are relations over~$\R^{m+n}$.

Sälzer et al.\ showed that every function computable by a DNN can be represented by an NBA.\footnote{They actually proved the result for the more restrictive class of eventually-always weak NBA. But for us it is more prudent to consider the more general class of NBA.}

\begin{proposition}[Theorem~1 in~\cite{SalzerABL22}]
\label{prop:nn2NBA}
    Let $\NN\colon \R^\idim \to \R^\cdim$ be a DNN.
    There exists an NBA $\A_{\NN}$ over $\Sigma^{\idim+\cdim}$ with
    \begin{equation*}
        L(\A_{\NN}) = \{w \in \!\wff{\!\idim+\cdim} \mid \NN(\dec(w_1), \dots, \dec(w_\idim)) \!=\! (\dec(w_{\idim+1}), \dots, \dec(w_{\idim+\cdim}))\}.
    \end{equation*}
\end{proposition}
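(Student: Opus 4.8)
The plan is to construct the automaton~$\A_\NN$ by a layer-wise decomposition that mirrors the definition of the DNN, building an NBA for each primitive operation and then composing them. Observe first that the graph of~$\NN$ is the relational composition of the graphs of its layers~$\ell_1, \dots, \ell_k$, and the graph of each layer is obtained by taking the product (in the vector sense) of the graphs of its constituent neurons. Since NBA recognize $\omega$-regular relations and $\omega$-regular relations are closed under relational composition (via a standard product construction that guesses and checks the intermediate word) and under taking products of components (by intersecting automata that each constrain one output coordinate while ignoring the others), it suffices to exhibit, for a single neuron, an NBA recognizing its graph over the well-formed encoding.

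A single neuron computes $\vec{x} \mapsto \sigma\bigl(\sum_i c_i x_i + b\bigr)$ with $c_i, b \in \Q$ and $\sigma$ either the identity or ReLU. I would decompose this into three $\omega$-regular building blocks. First, scaling by a rational constant and adding a rational bias: multiplying a binary-encoded real by a fixed rational and shifting by a fixed rational are recognizable by an NBA that tracks the carry (the carry stays bounded because the constant is fixed), so the graph of $\vec{x} \mapsto \sum_i c_i x_i + b$ is $\omega$-regular; equivalently, one can realize it as a linear constraint relation, and the excerpt already grants that linear constraints are $\omega$-regular. Second, the ReLU map $y \mapsto \max\{y, 0\}$: its graph decomposes into the two linear constraints $y \geq 0 \wedge z = y$ and $y < 0 \wedge z = 0$, each a finite Boolean combination of linear constraints, hence $\omega$-regular, and closure under union yields the graph of ReLU. (The identity activation is trivially $\omega$-regular.) Composing the affine block with the activation block gives the neuron's graph.

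Assembling these via the composition and product closure operations yields an NBA whose language, restricted to well-formed words, is exactly the graph of~$\NN$; intersecting with $\wff{\idim+\cdim}$ (which is $\omega$-regular) guarantees that all accepted words are well-formed, so the language can be read as the claimed relation over~$\R^{\idim+\cdim}$.

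The main obstacle I anticipate is \emph{not} the high-level closure reasoning but the faithful handling of the real-number encoding at the automaton level: the encoding $\dec(\cdot)$ is non-unique (e.g.\ the two binary representations of a dyadic rational), so each building-block automaton must accept \emph{all} encodings of a correct input/output pair, not just a canonical one, and the intermediate words guessed during relational composition must likewise be allowed to range over all valid encodings of the true intermediate real value. Getting the carry-propagation and the boundary behavior of ReLU (at $y = 0$, where $<$ versus $\leq$ interacts with representation ambiguity) exactly right, while keeping the state space finite, is the delicate part. Since this is precisely the construction carried out in~\cite{SalzerABL22}, I would invoke their detailed encoding for these low-level automata and concentrate the argument on the composition, which is where the statement for a full multi-layer DNN follows.
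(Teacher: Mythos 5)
The paper does not prove this proposition itself: it is imported verbatim as Theorem~1 of~\cite{SalzerABL22}, and your sketch reconstructs essentially the argument given there --- neuron-wise decomposition into affine maps over $\Q$ and the ReLU relation, each $\omega$-regular, assembled by the relational-composition operation that the paper restates as Lemma~4 of the same reference. Your identification of the non-unique encoding as the delicate low-level issue is also the right one, so the proposal is correct and follows the same route as the cited source.
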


For our application, we need to slightly modify the automaton~$\A_\NN$ from Proposition~\ref{prop:nn2NBA} so that it also copies its input for further use. This modification can be implemented by replacing each transition label~$(a_1, \ldots, a_n, a_1',\dots, a_m')$ by $(a_1, \ldots, a_n,a_1, \ldots, a_n, a_1',\dots, a_m')$.

\begin{corollary}\label{cor:nn2NBA}
    Let $\NN\colon \R^\idim \to \R^\cdim$ be a DNN.
    There exists an NBA $\extend{\A}_{\NN}$ over $\Sigma^{\idim+\idim+\cdim}$ with
    \begin{align*}
        L(\extend{\A}_{\NN}) = \{w \in {}& \wff{\idim+\idim+\cdim} \mid 
        (w_1, \dots, w_\idim) = (w_{\idim+1}, \dots, w_{\idim+\idim}) \text{ and } \\
        & \NN((\dec(w_1), \dots, \dec(w_\idim)) = (\dec(w_{\idim+\idim+1}), \dots, \dec(w_{\idim+\idim+\cdim}))\}.
    \end{align*}
\end{corollary}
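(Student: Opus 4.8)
The plan is to obtain $\extend{\A}_{\NN}$ from the automaton $\A_{\NN}$ of Proposition~\ref{prop:nn2NBA} by a purely syntactic relabeling of transitions that leaves the state set, the initial state, and the accepting states untouched, and then to argue that this relabeling realizes exactly the claimed language. Concretely, writing $\A_{\NN} = (Q, \Sigma^{\idim+\cdim}, q_0, \delta, F)$, I would define $\extend{\A}_{\NN} = (Q, \Sigma^{\idim+\idim+\cdim}, q_0, \extend{\delta}, F)$, where $\extend{\delta}$ is obtained by replacing every transition $(q, (a_1,\dots,a_\idim,a_1',\dots,a_\cdim'), q') \in \delta$ by the transition $(q, (a_1,\dots,a_\idim,a_1,\dots,a_\idim,a_1',\dots,a_\cdim'), q')$, i.e., the first $\idim$ symbols of each label are duplicated into a fresh middle block. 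This formalizes the informal recipe already stated before the corollary.

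Next I would establish a bijection between runs. For a word $w$ over $\Sigma^{\idim+\cdim}$, let $\extend{w}$ denote the word over $\Sigma^{\idim+\idim+\cdim}$ obtained by copying, at every position, the first $\idim$ symbols into the new middle block. By construction of $\extend{\delta}$, a state sequence $q_0 q_1 q_2 \cdots$ is a run of $\A_{\NN}$ on $w$ if and only if it is a run of $\extend{\A}_{\NN}$ on $\extend{w}$; since the two runs visit exactly the same states, one is accepting if and only if the other is. Moreover, every transition of $\extend{\A}_{\NN}$ carries a label whose middle $\idim$-block coincides with its first $\idim$-block, so every word accepted by $\extend{\A}_{\NN}$ is necessarily of the form $\extend{w}$ for some $w$. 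Together these observations yield $L(\extend{\A}_{\NN}) = \{\,\extend{w} \mid w \in L(\A_{\NN})\,\}$.

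Finally I would read off the stated language description from this identity. If $w \in \wff{\idim+\cdim}$, then duplicating its first $\idim$ components keeps all signs and the decimal point aligned, so $\extend{w} \in \wff{\idim+\idim+\cdim}$, and by construction $(\extend{w}_1,\dots,\extend{w}_\idim) = (\extend{w}_{\idim+1},\dots,\extend{w}_{\idim+\idim})$. The output components of $w$, originally at positions $\idim+1,\dots,\idim+\cdim$, are shifted to positions $2\idim+1,\dots,2\idim+\cdim = \idim+\idim+1,\dots,\idim+\idim+\cdim$ in $\extend{w}$, so the defining relation $\NN(\dec(w_1),\dots,\dec(w_\idim)) = (\dec(w_{\idim+1}),\dots,\dec(w_{\idim+\cdim}))$ of $L(\A_{\NN})$ transfers verbatim to $\NN(\dec(\extend{w}_1),\dots,\dec(\extend{w}_\idim)) = (\dec(\extend{w}_{\idim+\idim+1}),\dots,\dec(\extend{w}_{\idim+\idim+\cdim}))$. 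Combining these facts gives exactly the claimed $L(\extend{\A}_{\NN})$.

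I expect no genuine obstacle here; the only point requiring a moment of care is verifying that duplicating the first $\idim$ components preserves well-formedness, which is immediate because identical components are trivially sign- and point-aligned with one another and, since $w$ was already well-formed, with the remaining components as well.
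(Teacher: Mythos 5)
Your proposal is correct and matches the paper's own (one-sentence) justification exactly: the paper also obtains $\extend{\A}_{\NN}$ from $\A_{\NN}$ by replacing each transition label $(a_1,\dots,a_n,a_1',\dots,a_m')$ with $(a_1,\dots,a_n,a_1,\dots,a_n,a_1',\dots,a_m')$. Your write-up merely spells out the run bijection and the preservation of well-formedness, which the paper leaves implicit.
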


Thus, the semantics of DNNs can be captured by NBAs.
This is in general not true for plants.
Hence, in the following, we restrict ourselves to plants that can also be captured by NBAs.

\begin{definition}[$\omega$-regular plant]\label{def:plant2NBA}
    A plant $\plant\colon \R^{\idim+\cdim} \to \R^{\idim}$ is \emph{$\omega$-regular} if there exists an NBA $\A_{\plant}$ over $\Sigma^{\idim+\cdim+\idim}$ such that
    \begin{align*}
        L(\A_{\plant}) = \{ w \in {}& \wff{\idim+\cdim+\idim} \mid {} \\
        & \plant(\dec(w_1), \dots, \dec(w_{\idim+\cdim})) = (\dec(w_{\idim+\cdim+1}), \dots, \dec(w_{\idim+\cdim+\idim}))\}.
    \end{align*}
\end{definition}

Now, both the DNN and the plant are given by NBA. Hence, we can apply standard automata-theoretic constructions to capture a bounded number of applications of the control loop by repeatedly composing the NBA for the DNN and the NBA for the plant. 
To this end, we introduce the (parametric) composition operator $\circ_k$ constructing from two NBAs $\A_1$ and $\A_2$, which accept the graphs of two functions $f_1\colon \R^{k_1} \to \R^k$ and $f_2\colon \R^k \to \R^{k_2}$, an NBA $\A_1 \circ_k \A_2$ accepting the graph of $\vec{x} \mapsto f_2(f_1(\vec{x}))$.

\begin{lemma}[Lemma 4 of \cite{SalzerABL22}]  
Let $k,k_1,k_2 \ge 0$ and let $A_1$ and $\A_2$ be two NBAs over $\Sigma^{k_1+k}$ and $\Sigma^{k+k_2}$, respectively.
Then, there exists an NBA~$\A_1 \circ_k \A_2$ over $\Sigma^{k_1 + k_2}$ accepting the language
\begin{align*}
    \{ (u_1, \dots, u_{k_1}, w_{k +1},\dots, w_{k+ k_2}) \mid {}&{} \exists (v_1, \dots, v_k) \text{ s.t. } \\
    & (u_1, \dots, u_{k_1}, v_1, \dots, v_k) \in L(\A_1)  \text{ and }\\
    &(v_1, \dots, v_k, w_{k+1},\dots, w_{k+k_2}) \in L(\A_2)\}.
\end{align*}
\end{lemma}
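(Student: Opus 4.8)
The plan is to realize $\circ_k$ as the composition of two completely standard automata-theoretic operations: a \emph{synchronized product} (intersection) over the combined alphabet $\Sigma^{k_1+k+k_2}$, followed by an \emph{existential projection} that erases the $k$ middle tracks. First I would lift both inputs to the full alphabet. I reinterpret $\A_1$ as an NBA $\A_1'$ over $\Sigma^{k_1+k+k_2}$ that, upon reading a symbol $(\vec{a},\vec{b},\vec{c}) \in \Sigma^{k_1}\times\Sigma^k\times\Sigma^{k_2}$, behaves exactly as $\A_1$ on the block $(\vec{a},\vec{b})$ while ignoring the suffix $\vec{c}$; symmetrically $\A_2'$ reads $(\vec{b},\vec{c})$ and ignores $\vec{a}$. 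Because both copies read the \emph{same} middle block $\vec{b}$ at every position, any joint run automatically synchronizes $\A_1$ and $\A_2$ on precisely the shared coordinates $v_1,\dots,v_k$, which is the key mechanism that will implement the shared existential witness.

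Next I would form the Büchi intersection $\A_1' \cap \A_2'$. This is where the only genuine subtlety lies: the naive accepting set $F_1\times F_2$ is \emph{wrong}, since the two component runs may visit their accepting states at different positions. I would therefore use the standard flag construction on the state set $Q_1\times Q_2\times\{1,2\}$, where the phase bit records which of the two runs we are currently waiting on: in phase~$1$ we wait to see a state of $F_1$ and then switch to phase~$2$, and in phase~$2$ we wait to see a state of $F_2$ and then switch back to phase~$1$. Declaring the states entered at a phase-$2$-to-$1$ switch to be accepting guarantees that an infinite run is accepted exactly when both component runs visit their accepting sets infinitely often, i.e.\ exactly when $\A_1$ accepts the $(u,v)$-block and $\A_2$ accepts the $(v,w)$-block.

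It then remains to project away the middle tracks. I relabel every transition of $\A_1' \cap \A_2'$ by deleting its middle block, i.e.\ I map each label $(\vec{a},\vec{b},\vec{c})$ to $(\vec{a},\vec{c})\in\Sigma^{k_1+k_2}$, leaving the state set, initial state, and accepting set untouched; the result is the desired NBA $\A_1 \circ_k \A_2$ over $\Sigma^{k_1+k_2}$. Since relabelling only forgets symbols and never alters the underlying run graph, a word $(u,w)$ is accepted by the projection if and only if the intersection accepts some word $(u,v,w)$ whose middle block is a witness $(v_1,\dots,v_k)$ — which by the previous paragraph holds iff $(u_1,\dots,u_{k_1},v_1,\dots,v_k)\in L(\A_1)$ and $(v_1,\dots,v_k,w_{k+1},\dots,w_{k+k_2})\in L(\A_2)$ for that $v$. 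This is precisely the existential quantifier $\exists(v_1,\dots,v_k)$ appearing in the claimed language. I expect the Büchi intersection (the phase flag) to be the main obstacle, as it is the one step where correctness depends on more than a routine transformation of transition labels; the two liftings and the final projection manifestly preserve accepting runs, and the overall correctness argument reduces to the two-way implication spelled out above.
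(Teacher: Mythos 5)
Your construction is correct. The paper does not prove this lemma itself---it imports it verbatim as Lemma~4 of \cite{SalzerABL22}---so there is no in-paper argument to compare against; your cylindrification--intersection--projection argument, with the phase-bit construction handling the B\"uchi intersection and the relabelling realizing the existential quantifier over the shared middle tracks, is the standard proof of exactly this statement.
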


Now we are ready to prove our main result of this section: NNCS reachability restricted to $\omega$-regular plants is semi-decidable. Note that this is tight, as the problem is undecidable as shown in Theorem~\ref{thm:undecidable}: the plant just returning its control input as output is $\omega$-regular.

\begin{theorem}\label{thm:semi_decidable}
    The NNCS reachability problem is semi-decidable when restricted to $\omega$-regular plants.
\end{theorem}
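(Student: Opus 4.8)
The plan is to turn the reachability question into a sequence of emptiness checks on NBAs, one for each possible number of iterations~$k$, and then to dovetail over $k$. The semi-decision procedure will try $k = 0, 1, 2, \dots$, construct an NBA whose language is nonempty exactly when some initial state in $\init$ reaches $\target$ in precisely $k$ applications of the control loop, and test that automaton for emptiness. Since emptiness of NBAs is decidable and every positive instance is witnessed by some finite $k$, the procedure halts with ``yes'' on exactly the ``yes''-instances.

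First I would build an NBA $\A_C$ for a single iteration $\NNCSit$. By Corollary~\ref{cor:nn2NBA}, the copying automaton $\extend{\A}_\NN$ accepts the graph of $\vec{x} \mapsto (\vec{x}, \NN(\vec{x}))$, and by Definition~\ref{def:plant2NBA} the automaton $\A_\plant$ accepts the graph of $\plant\colon \R^{\idim+\cdim} \to \R^\idim$. Because the output $(\vec{x},\NN(\vec{x})) \in \R^{\idim+\cdim}$ of the former is exactly the input of the latter, composing them along these $\idim+\cdim$ shared components via the operator $\circ_k$ yields
\[
    \A_C = \extend{\A}_\NN \circ_{\idim+\cdim} \A_\plant,
\]
an NBA over $\Sigma^{\idim+\idim}$ accepting the graph of $\vec{x} \mapsto \plant(\vec{x}, \NN(\vec{x})) = \NNCSit(\vec{x})$. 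Iterating, I would let $\A_C^{(0)}$ be the NBA accepting the graph of the identity on $\R^\idim$ and set $\A_C^{(k+1)} = \A_C^{(k)} \circ_\idim \A_C$, so that $\A_C^{(k)}$ accepts the graph of $(\NNCSit)^k$ over $\Sigma^{\idim+\idim}$; each such automaton is effectively computable.

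Next I would encode the initial and target sets. As observed in the preceding discussion, (finite unions of) polyhedra are $\omega$-regular, so there exist NBAs $\A_\init$ and $\A_\target$ over $\Sigma^\idim$ accepting exactly the well-formed encodings of the points in $\init$ and $\target$. Interpreting $\A_\init$ on the first $\idim$ components and $\A_\target$ on the last $\idim$ components of $\A_C^{(k)}$ and intersecting all three automata—NBAs are closed under intersection and under this kind of component lifting—yields an NBA whose accepted words are precisely the well-formed encodings of pairs $(\vec{x}_0, (\NNCSit)^k(\vec{x}_0))$ with $\vec{x}_0 \in \init$ and $(\NNCSit)^k(\vec{x}_0) \in \target$. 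This language is nonempty if and only if such an $\vec{x}_0$ exists for that particular $k$, which is the emptiness test the procedure performs.

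The point that must be argued carefully is the correspondence between automaton emptiness and the existence of a \emph{real} witness ranging over the possibly infinite set~$\init$: since every automaton involved accepts only well-formed words, each accepted word decodes via $\dec$ to a genuine real vector, and conversely any suitable $\vec{x}_0 \in \init$ has a well-formed encoding that the combined automaton accepts. Thus a single emptiness check captures the existential quantifier over the entire initial polyhedron simultaneously, which is exactly what the naive per-state enumeration sketched at the start of the section cannot do. I do not expect a single hard step here; the real work is the bookkeeping of coordinate alignment in the repeated compositions. The procedure is only semi-decidable, and not decidable, precisely because no bound on the iteration count~$k$ can be computed in advance, in line with the undecidability established in Theorem~\ref{thm:undecidable}.
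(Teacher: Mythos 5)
Your proposal is correct and follows essentially the same route as the paper: compose $\extend{\A}_\NN$ with $\A_\plant$ via $\circ_{\idim+\cdim}$ to get an NBA for one control-loop iteration, iterate via $\circ_\idim$ starting from the identity-graph automaton, widen the initial- and target-set automata to $\Sigma^{\idim+\idim}$, and semi-decide by testing the triple intersection for emptiness for $k = 0, 1, 2, \dots$. The paper's proof is exactly this construction (with $\iter_k$ in place of your $\A_C^{(k)}$), so there is nothing to add.
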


\begin{proof}
    We are given a problem instance $(\NN, \plant, \init, \target)$ and need to (semi)-decide whether there exists a $k \ge 0$ such that $(\NNCSit)^k(\vec{x}_0) \in \target$ for some $\vec{x}_0 \in \init$. 
    Let $\idim$ be the dimension of the states of $\plant$ and $\cdim$ be the dimension of the control vectors computed by $\NN$, respectively.

    Let $\A_\NN$ (over $\Sigma^{\idim+\idim+\cdim}$) and $\A_\plant$ (over $\Sigma^{\idim+\cdim +\idim}$) be the NBAs as in Corollary~\ref{cor:nn2NBA} and Definition~\ref{def:plant2NBA}. Then,
    we define $\iter_0$ to be an NBA accepting the graph of the $\idim$-ary identity function
    \[
        L(\iter_0) = \set{w \in \wff{\idim+\idim} \mid (w_0, \dots, w_\idim) = (w_{\idim+1},\dots, w_{\idim+\idim})}
    \]
    and, for $k \ge 1$, $\iter_k = \iter_{k-1} \circ_{\idim} (\extend{\A}_\NN \circ_{\idim+\cdim} \A_\plant)$.

    By construction, we have
    \[
        (w_1, \dots, w_{\idim}, w_{\idim+1}, \dots, w_{\idim+\idim}) \in L(\iter_k)
    \]
    if and only if
    \[
        (\dec(w_{\idim+1}), \dots, \dec(w_{\idim+\idim})) \in (\NNCSit)^k(\dec(w_1), \dots, \dec(w_{\idim})).
    \]

    There are NBAs~$\A_0$ and $\A_\target$ accepting $\init$ and $\target$, as they are polyhedra.
    Both these NBAs have alphabet~$\Sigma^\idim$, while each $\iter_k$ has alphabet~$\Sigma^{\idim+\idim}$ where the first $\idim$ components encode the inputs and the last $\idim$ components encode the outputs.
    Hence, to restrict $\A_0$ and $\A_\target$ to $\init$ and $\target$, we need to widen $\A_0$ and $\A_\target$ to NBA with alphabet~$\Sigma^{\idim+\idim}$.
    Formally, let NBAs~$\extend{\A}_{0}$ and $\extend{\A}_{\target}$ (both over $\Sigma^{\idim+\idim}$) such that
    \begin{itemize}
        \item $L(\extend{\A}_0)$  contains the encodings of all vectors~$(x_1, \dots, x_{\idim+\idim})\in\wff{\idim+\idim}$ such that $(x_1, \dots, x_\idim)$ is in $\init\subseteq \R^\idim$ and $(x_{\idim+1}, \dots, x_{\idim+\idim}) \in \R^\idim$ is arbitrary, and  
        \item $L(\extend{\A}_\target)$  contains the encodings of all vectors~$(x_1, \dots, x_{\idim+\idim})\in\wff{\idim+\idim}$ such that $(x_1, \dots, x_\idim) \in \R^\idim$ is arbitrary and $(x_{\idim+1}, \dots, x_{\idim+\idim})$ is in $\target \subseteq \R^\idim$.
    \end{itemize}
    Now, there exist an $\vec{x}_0 \in \init$ and a $k\ge 0$ such that $(\NNCSit)^k(\vec{x}_0) \in \target$ if and only if the language of $\extend{\A}_0 \cap \iter_k \cap \extend{\A}_\target$ is nonempty. 

    In summary, to semi-decide the NNCS reachability problem for $\omega$-regular plants, we iteratively construct $\iter_k$ for $k\ge0$ and check $\extend{\A}_0 \cap \iter_k \cap \extend{\A}_\target$ for nonemptiness.
    \qed
\end{proof}

Let us remark that the construction in Theorem~\ref{thm:semi_decidable} does not require the initial set $\init$ and the target set $\target$ to be polyhedral.
It is sufficient that they are $\omega$-regular to effectively decide (non)emptiness of the intersection.
The class of $\omega$-regular languages is strictly more expressive than polyhedral sets.\footnote{For example, the set of natural numbers is $\omega$-regular (in the encoding used here), but it is not a polyhedron.}
Thus, our result is more general than the statement of Theorem~\ref{thm:semi_decidable}.

\subsection{Multi-mode linear plants}

In this subsection, we give an example of a plant model that falls into the class of $\omega$-regular languages.
Our example is inspired by linear hybrid automata~\cite{AlurCHH92}, which are finite state machines with constant-term ordinary differential equations (ODEs) in the modes (states) and guard conditions on the transitions.

Hybrid automata have two sources of nondeterminism: an enabled transition need not be taken (may-semantics), and multiple transitions may be enabled at the same time.
Because we have restricted ourselves to deterministic plants in this work, we need to introduce some restrictions.
First, we assume a fixed rational control period, and a transition can only be taken at the end of such a period.
Then, the solution of the ODEs is a linear map, which can be analytically computed, and our system becomes discrete-time.
Second, we require that exactly one guard is enabled, i.e., in each mode, all guards are pairwise-disjoint and their union is the universe.
To simplify the presentation, we do not include discrete updates with the transitions but note that these can easily be added.
We call the resulting model a multi-mode linear map.

\begin{definition}[Multi-mode linear map]
    A \emph{multi-mode linear map} is a tuple $\MMLM = (M, E, \idim, \cdim, \flow, \grd)$ consisting of
    a finite set~$M\subseteq \nats$ of modes, a set of edges~$E \subseteq M \times M$, input and control dimensions~$\idim$ and $\cdim$, a flow function~$\flow\colon M \to \Q^{\idim \times \idim} \times \Q^{\idim \times \cdim} \times \Q^{\idim}$ (mapping a mode to two matrices and a vector), and a guard function~$\grd\colon E \to \UPh(\idim+\cdim)$ (where $\UPh$ denotes the set of finite unions of polyhedra), satisfying
    \begin{itemize}
        \item if $(m, m') \in E$ and $(m, m'') \in E$, then $\grd(m, m') \cap \grd(m, m'') = \emptyset$, and

        \item $\bigcup_{m' \in M} \grd(m, m') = \R^{\idim+\cdim}$ for all $m \in M$.
    \end{itemize}
    The function~$f_\MMLM\colon M\times \R^{\idim+\cdim} \to M \times\R^\idim$ computed by $\MMLM$ is defined as
    \[
    f_\MMLM(m,x_1,\dots,x_\idim, u_1,\dots,u_\cdim) = (m',x_1',\dots,x_\idim')
    \]
    where $\flow(m) = (A,B,c)$,
    \[
    (x_1',\dots,x_\idim') = A\cdot (x_1,\dots,x_\idim)^T + B\cdot( u_1,\dots,u_\cdim)^T + c,
    \]
    and $m'$ is the unique mode such that 
    \[
    (x_1',\dots,x_\idim',u_1,\dots,u_\cdim) \in \grd(m, m').
    \]
\end{definition}

Note that the first component of inputs for $f_\MMLM$ is restricted to modes of $\MMLM$, not arbitrary reals as stipulated by the definition of plants. 
However, this is not an issue as long as the initial input has a mode in the first component, as $f_\MMLM$ also returns only outputs that have a mode in the first component. 

\begin{lemma}\label{lemma:lha_NBA}
    Multi-mode linear maps are $\omega$-regular plants.
\end{lemma}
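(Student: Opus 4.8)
The plan is to treat the mode as an additional, integer-valued state coordinate, so that a state of the plant becomes a tuple $(m, x_1, \dots, x_\idim)$ and $f_\MMLM$ turns into a plant of state dimension $\idim+1$ and control dimension $\cdim$, i.e.\ a function $\R^{(\idim+1)+\cdim} \to \R^{\idim+1}$. By Definition~\ref{def:plant2NBA} it then suffices to construct an NBA $\A_\plant$ over $\Sigma^{(\idim+1)+\cdim+(\idim+1)}$ whose accepted well-formed words are exactly the encodings of the tuples $(m, x_1, \dots, x_\idim, u_1, \dots, u_\cdim, m', x_1', \dots, x_\idim')$ lying on the graph of $f_\MMLM$. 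By the remark preceding the lemma, it is enough to capture this graph on inputs whose first coordinate is an actual mode; on all other inputs we may let $\plant$ behave arbitrarily, since reachability from a mode-valued initial state never leaves the set of mode-valued states.

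First I would decompose the graph along the finitely many edges. For an edge $(m,m') \in E$ with $\flow(m) = (A,B,c)$, let $\A_{m,m'}$ be an NBA that accepts precisely the well-formed words encoding tuples which simultaneously satisfy the four conditions: (i) the input mode coordinate equals the constant $m$; (ii) the continuous update $x' = A x + B u + c$; (iii) the output mode coordinate equals the constant $m'$; and (iv) the guard membership $(x', u) \in \grd(m, m')$. The automaton for the plant is then the finite union $\A_\plant = \bigcup_{(m,m') \in E} \A_{m,m'}$, which is $\omega$-regular by closure of NBA under union.

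It remains to argue that each of (i)--(iv) is $\omega$-regular and that they can be combined by intersection. Conditions (i) and (iii) describe the polyhedra $\{z : z_1 = m\}$ resp.\ $\{z : z_{(\idim+1)+\cdim+1} = m'\}$, each expressible as a conjunction of two linear constraints, which are $\omega$-regular by the results of Sälzer et al.\ recalled above. Condition (iv) is a finite union of polyhedra, since $\grd(m,m') \in \UPh(\idim+\cdim)$, and hence $\omega$-regular as well, after padding its constraints with zero coefficients on the mode and the output coordinates. The main technical step is condition (ii): I would write each of the $\idim$ scalar equations $x_j' = \sum_{i=1}^\idim A_{ji} x_i + \sum_{l=1}^\cdim B_{jl} u_l + c_j$ as a single linear equality over the full coordinate vector $z$ (with zero coefficients on the mode coordinates), and then express each such equality as the conjunction of two linear constraints. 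Because the entries of $A$, $B$, and $c$ are rational, these are exactly the rational linear constraints shown to be $\omega$-regular in the excerpt. Intersecting all of these $\omega$-regular languages together with $\wff{(\idim+1)+\cdim+(\idim+1)}$ then yields $\A_{m,m'}$, using closure of NBA under intersection.

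Finally I would verify correctness through the guard conditions, which is the place where the specific assumptions on $\MMLM$ are essential: because the guards of any fixed mode $m$ are pairwise disjoint and cover $\R^{\idim+\cdim}$, for every mode-valued input $(m, x, u)$ there is exactly one successor mode $m'$ with $(x', u) \in \grd(m,m')$, where $x' = A x + B u + c$. Consequently exactly one disjunct $\A_{m,m'}$ accepts the corresponding tuple, and the output it prescribes coincides with $f_\MMLM(m,x,u)$; conversely, every accepted well-formed word is of this form. Hence $L(\A_\plant)$ restricted to well-formed words is precisely the graph of $f_\MMLM$ on mode-valued inputs, which by the preceding remark establishes that every multi-mode linear map is an $\omega$-regular plant. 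The part I expect to require the most care is not any single closure step but the bookkeeping that glues them together: correctly modeling the mode as a coordinate, aligning the coordinate indices in the product alphabet, and invoking disjointness and coverage of the guards to guarantee that the accepted relation is genuinely the graph of a (total, on mode-valued inputs) function.
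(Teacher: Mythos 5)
Your proof is correct and follows essentially the same route as the paper's (much terser) sketch: encode the affine update $x' = Ax + Bu + c$ via the $\omega$-regularity of rational linear constraints, encode the mode transition via membership in the finite unions of polyhedra given by the guards, and combine everything using closure of NBAs under intersection and union. You additionally spell out the bookkeeping the paper leaves implicit (mode as an extra coordinate, edge-wise decomposition, and functionality via guard disjointness and coverage), which only strengthens the argument.
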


\begin{proof}[Sketch]
The following operations can be implemented by NBAs~\cite{SalzerABL22}:
\begin{itemize}
    \item Multiplication of real inputs with constants in $\Q$ and addition of reals. These two operations allow us to compute the output~$(x_1',\dots,x_\idim')$ from $(x_1,\dots,x_\idim)$ and $(u_1,\dots,u_\cdim)$. 
    \item Checking membership of a vector of reals in a fixed polyhedron. This allows us to compute the next mode~$m'$ from the current mode~$m$, the current state $(x_1,\dots,x_\idim)$, and the current input $(u_1,\dots,u_\cdim)$, as $m'$ is determined by the membership of $(x_1,\dots,x_\idim, u_1,\dots,u_\cdim)$ in a finite union of polyhedra.
\end{itemize}
This allows us to build an NBA that accepts the graph of $f_\MMLM$ for every given multi-mode linear map~$\MMLM$.
    \qed
\end{proof}

\begin{corollary}
The NNCS reachability problem is semi-decidable when the plant is restricted to multi-mode linear maps.
\end{corollary}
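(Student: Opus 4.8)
The plan is to treat this as an immediate specialization of the two preceding results. First I would invoke Lemma~\ref{lemma:lha_NBA}, which establishes that every multi-mode linear map is an $\omega$-regular plant: given a multi-mode linear map~$\MMLM$, there is an NBA~$\A_{f_\MMLM}$ accepting the graph of the induced function~$f_\MMLM$ in the sense of Definition~\ref{def:plant2NBA}. Since the restriction to multi-mode linear maps is merely a restriction to a subclass of $\omega$-regular plants, any algorithm that semi-decides reachability for all $\omega$-regular plants in particular semi-decides it for this subclass.

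Second, I would apply Theorem~\ref{thm:semi_decidable} directly to the NBA~$\A_{f_\MMLM}$ obtained above, together with the given DNN~$\NN$, initial set~$\init$, and target set~$\target$. The semi-decision procedure is exactly the one from the proof of Theorem~\ref{thm:semi_decidable}: iteratively construct the automata~$\iter_k$ by composing $\extend{\A}_\NN$ and $\A_{f_\MMLM}$ via the $\circ_k$ operator, and for each $k$ check emptiness of $\extend{\A}_0 \cap \iter_k \cap \extend{\A}_\target$. This terminates and answers ``yes'' exactly when some target state is reachable.

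The only point requiring a brief remark is the typing mismatch already flagged after the definition of multi-mode linear maps: the first component of an input to~$f_\MMLM$ ranges over modes~$M \subseteq \nats$ rather than over arbitrary reals as the general plant signature~$\plant\colon \R^{\idim+\cdim}\to\R^\idim$ permits. I would note that this causes no difficulty, since $f_\MMLM$ returns a mode in the first component of its output whenever its input has a mode there; hence, as long as the initial state~$\vec{x}_0 \in \init$ has a valid mode in its first coordinate, the entire iterated trajectory stays within the intended domain, and the automata-based construction faithfully captures it.

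There is no genuine obstacle at this level: all the technical work has been discharged by Lemma~\ref{lemma:lha_NBA} (encoding the affine flow maps and the finitely many guard polyhedra as NBAs) and by Theorem~\ref{thm:semi_decidable} (the composition and emptiness-checking loop). The corollary is obtained purely by specializing the plant class, so the ``hard part'' is already behind us; the proof itself amounts to chaining these two statements.
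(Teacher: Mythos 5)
Your proof is correct and matches the paper's (implicit) argument exactly: the corollary follows by combining Lemma~\ref{lemma:lha_NBA} with Theorem~\ref{thm:semi_decidable}, and your remark on the mode-typing issue mirrors the paper's own note after the definition of multi-mode linear maps.
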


\section{Conclusion}\label{sec:conclusion}

In this paper, we studied the reachability problem for dynamical systems controlled by deep neural networks.
We showed that, for the common ReLU activations, the problem is undecidable even when the plant is trivial and the network is restricted to integral weights and a singleton initial set; furthermore, we can either fix the input and output dimensions to~$3$ and the number of hidden layers to~$6$, or use a single hidden layer.
We then turned to the question when the problem can be semi-decided; here we extended a recent encoding of neural networks in Büchi automata and showed that $\omega$-regular plants as well as input and target sets are sufficient for a semi-decision procedure; as an example, we demonstrated that a model akin to linear hybrid automata is $\omega$-regular.

\section*{Acknowledgments}

We thank the participants of AISoLA 2023 for suggesting to study the NNCS reachability problem with one hidden layer.

This research was partly supported by the Independent Research Fund Denmark under reference number 10.46540/3120-00041B, DIREC - Digital Research Centre Denmark under reference number 9142-0001B, and the Villum Investigator Grant S4OS under reference number 37819.

\bibliographystyle{splncs04}
\bibliography{bibliography}

\end{document}